\documentclass{article} % For LaTeX2e
\PassOptionsToPackage{numbers, compress}{natbib}

\usepackage{iclr2026_conference,times}

\usepackage{amsmath,amsfonts,bm,dsfont}

\def\eqref#1{equation~\ref{#1}}
\def\1{\bm{1}}

\def\vc{{\bm{c}}}

\def\vr{{\bm{r}}}

\def\mA{{\bm{A}}}

\def\mC{{\bm{C}}}

\def\mH{{\bm{H}}}
\def\mI{{\bm{I}}}

\def\mK{{\bm{K}}}

\def\mO{{\bm{O}}}
\def\mP{{\bm{P}}}
\def\mQ{{\bm{Q}}}
\def\mR{{\bm{R}}}
\def\mS{{\bm{S}}}

\def\mW{{\bm{W}}}
\def\mX{{\bm{X}}}
\def\mY{{\bm{Y}}}

\def\mPi{{\bm{\Pi}}}

\DeclareMathAlphabet{\mathsfit}{\encodingdefault}{\sfdefault}{m}{sl}
\SetMathAlphabet{\mathsfit}{bold}{\encodingdefault}{\sfdefault}{bx}{n}

\def\sR{{\mathbb{R}}}

\newcommand{\E}{\mathbb{E}}

\usepackage{xspace}
\usepackage[inline]{enumitem}

\newcommand{\ourmethod}{\textsc{All-In}\xspace}

\usepackage{wrapfig}
\usepackage{multirow}
\usepackage{pdfpages}
\usepackage{booktabs}
\usepackage{adjustbox}

\usepackage{amsthm}
\usepackage{thmtools}
\usepackage{thm-restate}

\theoremstyle{plain}

\theoremstyle{definition}

\theoremstyle{remark}

\usepackage{hyperref}
\usepackage{url}

\usepackage[utf8]{inputenc} % allow utf-8 input
\usepackage[T1]{fontenc}    % use 8-bit T1 fonts
\usepackage{hyperref}       % hyperlinks
\usepackage{url}            % simple URL typesetting
\usepackage{booktabs}       % professional-quality tables
\usepackage{amsfonts}       % blackboard math symbols
\usepackage{nicefrac}       % compact symbols for 1/2, etc.
\usepackage{microtype}      % microtypography
\usepackage{xcolor}         % colors
\hypersetup{
    colorlinks,
    linkcolor={red!50!black},
    citecolor={blue!50!black},
    urlcolor={blue!80!black}
}

\usepackage{cleveref}
\usepackage{subcaption}

\iclrfinalcopy

\title{Bridging Input Feature Spaces Towards \\ Graph Foundation Models}

\author{Moshe Eliasof \\ University of Cambridge \\
Ben-Gurion University of the Negev \\
\texttt{me532@cam.ac.uk} \\
\And
Krishna Sri Ipsit Mantri \\
Purdue University\\
\texttt{mantrik@purdue.edu}\\
\And
Beatrice Bevilacqua \\
Purdue University\\
\texttt{bbevilac@purdue.edu}\\
\And
Bruno Ribeiro \\
Purdue University\\
\texttt{ribeirob@purdue.edu}\\
\And
Carola-Bibiane Sch\"onlieb \\
University of Cambridge\\
\texttt{cbs31@cam.ac.uk}
}

\begin{document}

\maketitle
% \lhead{Preprint}

\begin{abstract}
Unlike vision and language domains, graph learning lacks a shared input space, as input features differ across graph datasets not only in semantics, but also in value ranges and dimensionality. This misalignment prevents graph models from generalizing across datasets, limiting their use as foundation models.
In this work, we propose \ourmethod, a simple and theoretically grounded method that enables transferability across datasets with different input features. Our approach projects node features into a shared random space and constructs representations via covariance-based statistics, thus eliminating dependence on the original feature space. 
We show that the computed node-covariance operators and the resulting node representations are invariant in distribution to permutations of the input features. We further demonstrate that the expected operator exhibits invariance to general orthogonal transformations of the input features.
Empirically, \ourmethod achieves strong performance across diverse node- and graph-level tasks on unseen datasets with new input features, without requiring architecture changes or retraining. These results point to a promising direction for input-agnostic, transferable graph models.
\end{abstract}
\section{Introduction}
\label{sec:intro}

Foundation models have shown remarkable success in domains such as language and vision, where large-scale pretraining enables strong performance across a wide range of downstream tasks. A similar goal has emerged for graph learning: to develop graph foundation models that generalize across tasks, domains, and datasets~\citep{mao2024graph}. However, a key obstacle in this direction is the lack of transferability across graphs, as knowledge learned from one graph is often difficult to apply to another due to fundamental differences in their structure and, critically, their input features.

Unlike vision or language data, graph datasets typically do not share a common input space. Node features often differ significantly not only in distribution and semantics but also in dimensionality from one graph to another. Furthermore, graphs themselves may vary in size, sparsity, and topological patterns.
These mismatches break many of the assumptions that underlie successful generalization in other domains, making it difficult to define a common representation space or pretraining strategy.

Existing approaches to graph foundation models fall into two broad categories. The first integrates LLMs by serializing graph data into text or designing prompt-based mechanisms~\citep{liu2024one,zhao2023graphtext,chen2024exploring,fatemi2024talk,perozzi2024letgraphtalk,chen2024llaga,zhao2023graphtext,he2024unigraph,huang2023prodigy,tang2024graphgpt, kim2024revisiting, zhao2024oneforall, gong2024self, sun2022gppt, sun2023all}, leveraging LLM capabilities but often discarding fine-grained graph properties. The second direction aims to explicitly align or adapt feature spaces across datasets using techniques like input projections~\citep{xia2024anygraph,yu2024textfree,zhao2024oneforall}, specialized encoders~\citep{lachi2024graphfm}, structuralization~\citep{frasca2024cross}, or order statistics~\citep{shen2024zero}. However, these methods often remain specialized to particular settings or tasks, or may require careful adaptation to new scenarios. %

\begin{figure}[t]
    \centering
    \begin{subfigure}[b]{0.5\textwidth}
        \includegraphics[width=\linewidth]{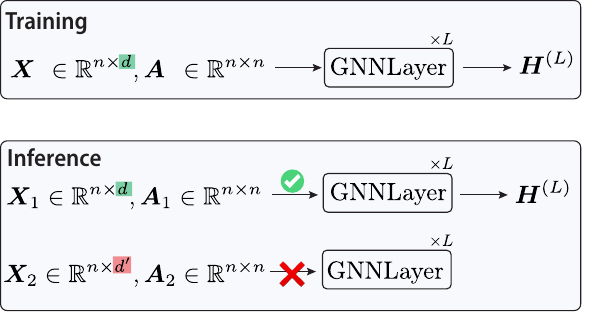 }
        \caption{Motivation: The challenge of varying input features}
    \end{subfigure}
    \begin{subfigure}[b]{0.49\textwidth}
        \centering
        \includegraphics[width=\linewidth]{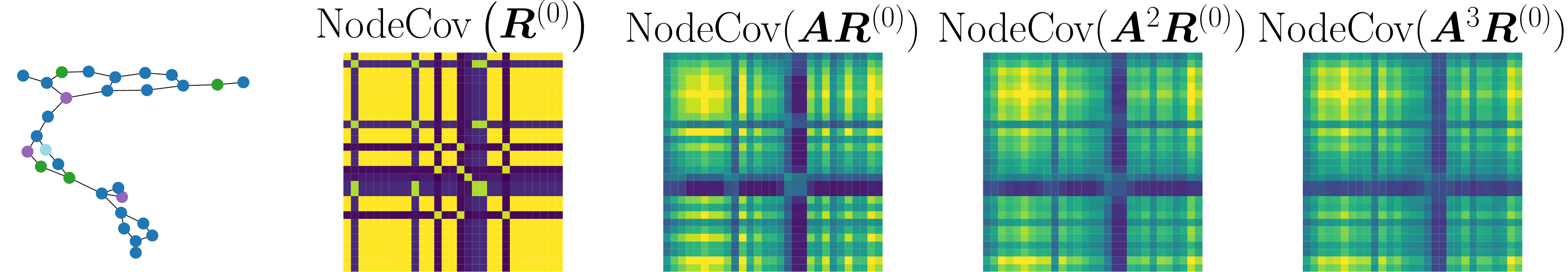}
        \includegraphics[width=\linewidth]{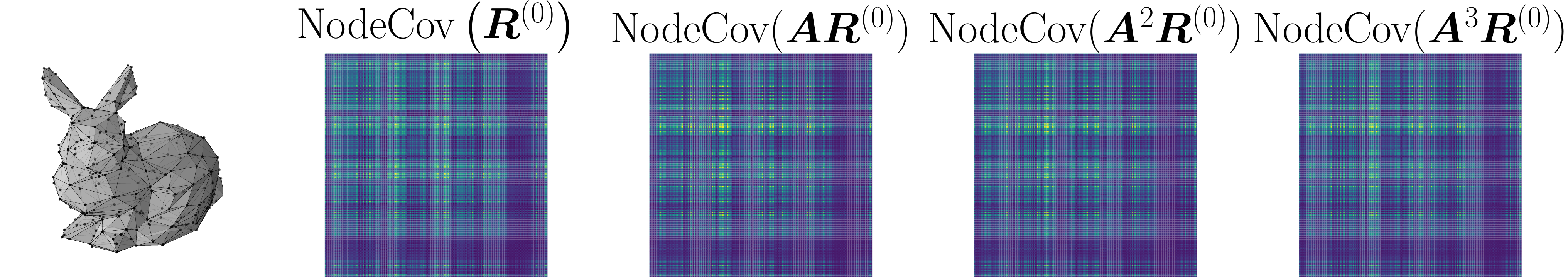}
        \includegraphics[width=\linewidth]{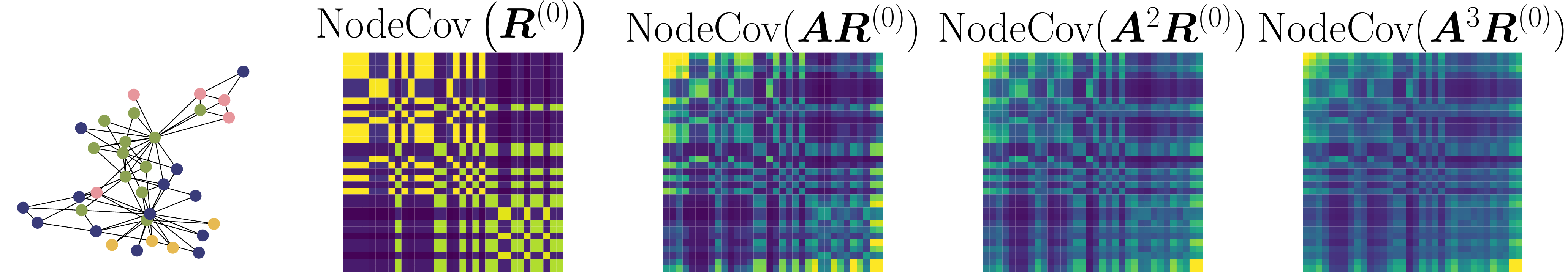}
        \caption{\ourmethod consistent node-covariance operators}
  \end{subfigure}
    \caption{\textbf{Addressing feature heterogeneity with \ourmethod's node-covariance operators.} (a) When a GNN is trained on graph data with node features $\mX$ of dimension $d$, it cannot be directly applied on graphs with features of a different dimensionality $d'$. (b) \ourmethod computes $n \times n$ node-covariance operators, capturing node similarities, providing a common space that is independent of the original, heterogeneous, feature spaces. Different node colors indicate distinct node features.
  }
    \label{fig:motivation}
    
\end{figure}
In this work, we propose a novel approach, grounded in statistical principles, to overcome input feature heterogeneity (\Cref{fig:motivation}). Our method first projects potentially disparate node features into a common, high-dimensional space using a stochastic projection matrix. We then leverage second-order statistics within this space using covariance operators. Specifically, we model feature dimensions as independent and identically distributed samples from an unknown distribution over the nodes, and compute the empirical node-covariance matrix based on these projected representations. This matrix captures pairwise node similarities based on how their projected features co-vary, providing a representation inherently robust to changes in feature semantics, value, and dimensionality.

We introduce \ourmethod (All Input spaces), a graph learning framework built upon this principle. 
Instead of directly processing raw node features in downstream layers, \ourmethod utilizes the computed stochastic node-covariance matrix (and its higher-order variants), as shown in \Cref{fig:motivation}, as a graph operator within a graph neural network (GNN). This node-covariance matrix captures interactions between nodes, specifically, how similar two nodes are in terms of their feature activations across the feature dimensions. Our theoretical analysis reveals significant robustness properties: \begin{enumerate*}[label=(\alph*)]
\item The computed operators and, critically, the resulting node representations throughout the GNN are invariant in distribution to arbitrary permutations of the original input features;
\item The expected operator is invariant to general orthogonal transformations (basis changes) of the input features;
\item The overall method is inherently insensitive to dimensional mismatches across datasets.
\end{enumerate*}
We further identify qualitative conditions under which covariance-based representations retain task-relevant information and enable transfer across datasets with different input features.

Our empirical results confirm the efficacy of this approach: \ourmethod achieves strong transfer performance to new datasets with new input features across diverse node- and graph-level tasks. As a result, \ourmethod offers a promising approach toward the development of graph foundation models.

\section{Related Work}
\label{sec:related_work}

\textbf{Graph Foundation Models (GFMs).} GFMs aim to learn representations that generalize across datasets and tasks, but achieving robust generalization remains challenging, especially when node features change. Some approaches integrate LLMs by converting graphs to text or embedding features through prompt-based
designs~\citep{liu2024one,zhao2023graphtext,chen2024exploring,fatemi2024talk,perozzi2024letgraphtalk,chen2024llaga,zhao2023graphtext,he2024unigraph,huang2023prodigy,tang2024graphgpt, kim2024revisiting, zhao2024oneforall, gong2024self, sun2022gppt, sun2023all}, or by generating or augmenting graphs with LLM guidance before training a graph encoder~\citep{xia2024opengraph}, but this can lead to loss of structural details. Text-attributed GFMs further learn transferable vocabularies or automatically search architectures on such LLM-derived features~\citep{wang2024gft, chen2025autogfm}, which improves transfer within TAGs but does not directly handle non-textual node attributes. Other works align feature spaces through projections~\citep{xia2024anygraph,yu2024textfree,zhao2024oneforall, fang2023universal} and multi-domain feature or structure aligners with prompts or mixtures-of-experts~\citep{yu2025samgpt, yuan2025howbridge}, perceiver-based encoders~\citep{lachi2024graphfm}, computing analytical solutions (in the case of node classification)~\citep{zhao2024graphany}, encoding features into the graph structure~\citep{frasca2024cross, galkin2024towards, wang2024towards, franks2025towards} or learning shared structural vocabularies in Riemannian spaces~\citep{sun2025riemanngfm}, or encoding feature relationships~\citep{shen2024zero}. While these methods advance GFM capabilities, they often require task-specific adaptations, leaving a gap for truly input-space-agnostic solutions. \ourmethod offers a distinct path: it creates transferable representations by processing arbitrary input features through stochastic projections and node-covariance operators, enabling frozen-encoder transfer without task- or domain-specific prompts or architectural changes.

\textbf{Structural and Positional Encodings.} Efforts to create universal graph representations include transferable structural and positional encodings (SPEs)~\citep{rampavsek2022recipe,canturk2024graph,chen2025towards, kim2024revisiting}. SPEs aim to capture graph topology in a feature-agnostic manner, often within Graph Transformers or GNNs. While such SPEs can complement node features, \ourmethod directly addresses the challenge of heterogeneous node features themselves, transforming them into a robust, transferable format using their covariance structure, irrespective of any additional SPEs.

\textbf{Covariance networks.}
Covariance matrices have also informed the design of neural networks. For instance, coVariance Neural Networks (VNNs)~\citep{saurabh2022covariance} process $d \times d$ sample covariance matrices, with $d$ the input feature dimension, which describe feature inter-correlations, offering benefits like stability to varying sample sizes and inspiring extensions for fairness~\citep{cavallo2025fair} and sparsity~\citep{cavallo2024sparse}. Other related efforts focus on transferring principal components derived from data covariance~\citep{hendy2024tlpca}.
While these methods analyze relationships between features using sample covariance matrices, \ourmethod constructs an $n \times n$ node-covariance matrix, with $n$ number of nodes. 
This operator quantifies similarities between pairs of nodes based on how their (randomly projected) features co-vary across dimensions. This distinct formulation is tailored to building transferable representations from graphs with heterogeneous node features, addressing a challenge different from that targeted by the aforementioned approaches.

\section{Method}
\label{sec:method}

Our method, \ourmethod, replaces dataset-specific raw node features with covariance-based operators that are better suited for generalization across input feature spaces. The approach comprises three main stages: (1) Random Feature Projection to map input features to a shared space, (2) Node-Covariance Operator computation to capture robust node similarities, and (3) Operator-based Propagation to learn transferable node representations. An overview of \ourmethod can be found in \Cref{fig:method}.

\begin{figure}[t]
    \centering
    \includegraphics[width=\linewidth]{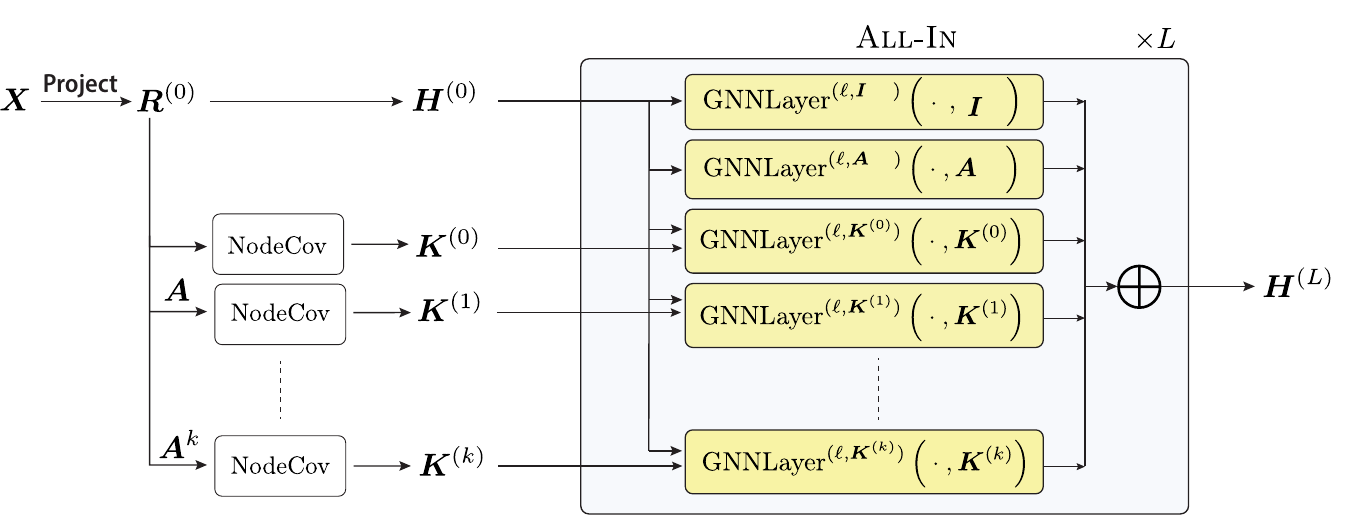}
    \caption{\textbf{The \ourmethod Architecture.} Input node features $\mX$ are first randomly projected into $\mR^{(0)}$. This $\mR^{(0)}$ serves as  initial node representations $\mH^{(0)}$. Concurrently, $\mR^{(0)}$ and its propagated versions (e.g., $\mR^{(p)} = \mA^p \mR^{(0)}$) are used to compute a set of node-covariance matrices $\{\mK^{(p)}\}_{p=0}^k$ capturing diverse orders of feature-based node similarities. These matrices are used as operators within different GNN (sub-)layers, whose outputs are concatenated to form the updated node representation.}
    \label{fig:method}
\end{figure}

\paragraph{Random Feature Projections.} Given a graph with $n$ nodes and node feature matrix $\mX \in \mathbb{R}^{n \times d}$, where the input dimension $d$ may vary across graph datasets, we first apply a random linear transformation to project the features into a unified fixed-dimensional space $h$ that is shared across datasets: 
\begin{align}\label{eq:projectednode}
    \mR^{(0)} = \mX \mC , \qquad \text{with vec}(\mC) \sim \mathcal{N}(\mathbf{0}, \mI_{d h})\text{ sampled at each forward pass},
\end{align}
that is, $\mC \in \mathbb{R}^{d \times h}$ is an isotropic Gaussian random weight matrix sampled independently at each forward pass. This step is key to ensuring that our approach is invariant (in distribution) to feature permutations, as we discuss in \Cref{sec:theory}.

\paragraph{Node-Covariance Operators.} We treat each column of $\mR^{(0)}$ as an i.i.d. signal over the nodes and compute the node-covariance matrix to capture second-order relationships (node similarities) based on feature co-variation across the latent dimensions:
\begin{align}\label{eq:cov}
    \mK^{(0)} = \text{NodeCov}(\mR^{(0)}) = \frac{1}{h}\mR^{(0)}_c {\mR^{(0)}_c}^T \in \mathbb{R}^{n \times n},
\end{align}  
where $\mR^{(0)}_c \in \mathbb{R}^{n \times h}$ is the centered projected feature matrix defined by
$\mR^{(0)}_c = \mR^{(0)} - \mathbf{1}_n \bar{\vr}$ with $\bar{\vr} = \frac{1}{n} \sum_i^n \mR^{(0)}_i \in \mathbb{R}^{1 \times h}
$ the  empirical mean of the projected node features, and $\mathbf{1}_n \in \mathbb{R}^{n \times 1}$ the all-ones vector. 
This centering operation is equivalent to pre-multiplying by the geometric centering matrix $\mPi_c = \mI_n - \frac{1}{n}\mathbf{1}_n\mathbf{1}_n^T$, i.e., $\mR^{(0)}_c = \mPi_c \mR^{(0)}$.
The resulting $\mK^{(0)}$ is an $n \times n$ matrix reflecting node similarities in the projected feature space. An interesting property is that if we consider two nodes $u$ and $v$ with feature vectors $\mX_{u}$ and $\mX_{v} = -\mX_{u}$, then their auto-covariance terms in $\mK^{(p)}$ coincide, but their rows $\mK^{(p)}_{u}$ and $\mK^{(p)}_{v}$ differ in the cross-covariance entries with other nodes because their signs flip, so message passing based on $\mK^{(p)}$ can still distinguish them.

To integrate structural information with feature similarities, we compute higher-order covariance matrices based on propagated features. Specifically, for each $p=1,2,\dots,k$, we first perform message passing on the initial projected features $\mR^{(0)}$ using the graph's adjacency matrix $\mA$: 
\begin{align*}
    \mR^{(p)} = \mA^p \mR^{(0)}.
\end{align*}  
Then, we compute the covariance matrix on these propagated, centered features $\mR^{(p)}_c = \mPi_c \mR^{(p)}$:
\begin{align}\label{eq:covp}
    \mK^{(p)} = \text{NodeCov}(\mR^{(p)}) = \frac{1}{h} \mR^{(p)}_c {\mR^{(p)}_c}^T  \in \mathbb{R}^{n \times n}.
\end{align}
The operator $\mK^{(p)}$ captures node similarities based on features aggregated from neighborhoods up to $p$ hops away, thus encoding increasingly global structural context in the graph. 

\paragraph{Node Representations.}
We collect a set of graph operators, which includes the identity matrix $\mI$, the adjacency matrix $\mA$, and the computed node-covariance matrices $\mathcal{K} = \{  \mK^{(p)}\}_{p=0}^k$:
\begin{align}\label{eq:operators}
    \mathcal{O}=\{\mI,\mA,\mK^{(0)},\mK^{(1)}, \ldots, \mK^{(k)}\}.
\end{align}
Instead of using the original node features, we rely on the random projections $\mR^{(0)}$, potentially augmented with structural encodings, such as random-walk encodings~\citep{dwivedi2022graph}. 
That is, we let $\mH^{(0)}$ be
\begin{align}\label{eq:h0}
    \mH^{(0)} = \mR^{(0)} \oplus \mS
\end{align}
where $\oplus$ indicates concatenation and $\mS \in \sR^{n \times h_s}$ is a structural encoding matrix. We note that, although the node-covariance operators $\mK^{(p)}$ capture second-order statistics, ALL-IN maintains first-order information: the projected features $\mR^{(0)}$ are used directly as part of the initial node representations $\mH^{(0)}$ in \Cref{eq:h0}. For example, if two nodes $u$ and $v$ have feature vectors $\mX_{u}$ and $\mX_{v} = -\mX_{u}$, then their projected features satisfy $\mR^{(0)}_{u} = \mX_{u}\mC$ and $\mR^{(0)}_{v} = -\mX_{u}\mC$, so they are distinguishable in $\mH^{(0)}$.

At each layer $\ell = 1, \ldots, L$, we propagate the current node representations using every operator $\mO \in \mathcal{O}$, and concatenate the outputs to obtain the updated representations:
\begin{align}\label{eq:propagation}
    \mH^{(\ell)} = \bigoplus_{\mO \in \mathcal{O}} \text{GNNLayer}^{(\ell, \mO)}(\mH^{(\ell - 1)}, \mO ),
\end{align}
where $\text{GNNLayer}^{(\ell, \mO)}$ is the GNN layer associated with operator $\mO \in \mathcal{O}$ in layer $\ell$, taking as input $\mH^{(\ell - 1)}$ and performing message passing using $\mO$ as the operator, using learnable weights $\mW^{(\ell, \mO)} \in \sR^{h^{(\ell - 1)} \times h^{(\ell)}}$ and $h^{(0)} = h + h_s$.

In the presence of edge features, which, similarly to node features, may vary across datasets, we employ an analogous strategy. Specifically, we first project the edge features into a fixed-dimensional space using an isotropic Gaussian random weight matrix, yielding edge representations that are independent of feature dimensionality. 
Then, we aggregate these projected edge features at the node level (e.g., by averaging features of incoming edges for each node) to obtain node-level representations $\mR^{(0)}_\text{edge}$ derived from edges. We then compute node-covariance matrices $\mK^{(p)}_\text{edge}$ based on these aggregated (and potentially propagated) features, similar to \Cref{eq:covp}. Finally, we add these edge-derived covariance operators $\mathcal{K}_\text{edge} =\{\mK^{(0)}_\text{edge}, \ldots, \mK^{(k)}_\text{edge}\}$ to the operator set $\mathcal{O}$ (\Cref{eq:operators}). This allows the model (\Cref{eq:propagation}) to incorporate edge information while remaining compatible across datasets with differing edge feature spaces.

\section{Theoretical Insights}\label{sec:theory}

This section establishes the theoretical foundations underpinning the ability of \ourmethod to handle heterogeneous input features and enable generalization across datasets.  A core contribution is proving the method's robustness to variations in feature representation.
We first demonstrate that the node-covariance operators and the resulting node representations are invariant \emph{in distribution} to arbitrary permutations of the input features, providing robustness to feature re-ordering. We then show that the \emph{expected node-covariance operator} is invariant to general orthogonal transformations, ensuring robustness to the choice of orthonormal basis (\Cref{sec:invariance}). Building on these properties, we validate the stochastic training procedure using Jensen's inequality under standard convexity assumptions (\Cref{sec:objective}). Finally, we discuss conditions supporting transferability, analyzing scenarios where the operator remains stable across graphs with differing feature distributions and proving its consistency for large projection dimensions (\Cref{sec:conditions}). All proofs are provided in \Cref{app:theory}.

%%%%%%%%%%%%%%%
\subsection{Invariance to Feature Space Transformations}\label{sec:invariance}

A primary obstacle to cross-dataset transfer is the lack of feature standardization, leading to arbitrary differences in feature ordering and basis choice across datasets. Our approach, centered on node-covariance after random projection, inherently addresses these issues through invariance properties.
First, the use of random isotropic Gaussian projections renders the process statistically insensitive to the order of input features. We formalize this by showing that the distribution of the projected feature matrix remains unchanged when the original features are permuted.

\begin{restatable}[Distributional Invariance of Projected Features to Feature Permutation]{proposition}{distributional}\label{prop:distributional_invariance}
Let $\mX \in \mathbb{R}^{n \times d}$ be node features, $\mP \in \mathbb{R}^{d \times d}$ be any permutation matrix, and $h$ be the projection dimension. Let $\mC \in \mathbb{R}^{d \times h}$ be an isotropic Gaussian random matrix (i.e., $\text{vec}(\mC) \sim \mathcal{N}(\mathbf{0}, \mI_{d h})$. Define the projected features as $\mR^{(0)} = \mX\mC$ and the features projected after permutation as $\bar{\mR}^{(0)} = (\mX\mP)\mC$. Then $\mR^{(0)}$ and $\bar{\mR}^{(0)}$ are equal in distribution:
$\mR^{(0)} \stackrel{d}{=} \bar{\mR}^{(0)}$.
\end{restatable}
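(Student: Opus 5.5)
The plan is to move the permutation off the data matrix and onto the random projection, and then use the rotational invariance of the isotropic Gaussian. By associativity, $\bar{\mR}^{(0)} = (\mX\mP)\mC = \mX(\mP\mC)$. Set $\tilde{\mC} = \mP\mC$. Since $\mR^{(0)} = \mX\mC$ and $\bar{\mR}^{(0)} = \mX\tilde{\mC}$ are obtained from $\mC$ and $\tilde{\mC}$ by the same deterministic (indeed linear) map $\mathbf{M} \mapsto \mX\mathbf{M}$, it suffices to prove $\tilde{\mC} \stackrel{d}{=} \mC$. Equality in distribution is preserved under pushforward by a measurable function, so the claim then follows.

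To show $\mP\mC \stackrel{d}{=} \mC$, work with vectorizations. Using the column-stacking identity $\text{vec}(\mP\mC) = (\mI_h \otimes \mP)\,\text{vec}(\mC)$, we get that $\text{vec}(\tilde{\mC})$ is a linear image of the Gaussian vector $\text{vec}(\mC) \sim \mathcal{N}(\mathbf{0}, \mI_{dh})$. It is therefore Gaussian, with mean $\mathbf{0}$ and covariance
\begin{align*}
(\mI_h \otimes \mP)(\mI_h \otimes \mP)^T = \mI_h \otimes \mP\mP^T = \mI_{dh}.
\end{align*}
Here we use that a permutation matrix is orthogonal, $\mP\mP^T = \mI_d$. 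A Gaussian law is determined by its mean and covariance, so $\text{vec}(\tilde{\mC}) \sim \mathcal{N}(\mathbf{0}, \mI_{dh})$, which means $\tilde{\mC} \stackrel{d}{=} \mC$.

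There is also a more elementary route that avoids Kronecker products. The entries of $\mP\mC$ are exactly the entries of $\mC$ with their rows reordered. Because the entries are i.i.d. standard normal, any fixed reindexing leaves their joint law unchanged.

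I do not expect a real obstacle. The only step needing care is noting that $\mP$ multiplies $\mC$ from the left, so it acts on the $d$ rows (the input-feature index) and not on the $h$ projection columns. That is exactly why the Kronecker factor is $\mI_h \otimes \mP$. The same argument in fact works verbatim for any orthogonal $\mP$, which anticipates the later orthogonal-invariance results.
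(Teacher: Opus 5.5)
Your proof is correct and follows essentially the same route as the paper. Both move the permutation onto the Gaussian matrix and use the fact that an orthogonal (here permutation) transform of $\mC$ preserves its isotropic Gaussian law; the paper does this with $\mP^T\mC$ column by column, while you use $\mP\mC$ with the identity $\text{vec}(\mP\mC) = (\mI_h \otimes \mP)\,\text{vec}(\mC)$, which is only a cosmetic difference.
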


In essence, \Cref{prop:distributional_invariance} establishes that random projections effectively ``mix'' features, rendering their original ordering statistically irrelevant after projection. More importantly, the permutation invariance is characterized {\em in distribution}, rather than pointwise: for a fixed random projection $\mC$, the features in $\mR^{(0)}$ retain sensitivity to input permutations, thereby enabling a neural network to better capture the relationships between node features and topology.

To illustrate this concept, consider three nodes $u, v, w \in V$ with features $\mX_u = (0,1)$, $\mX_v = (0,1)$, and $\mX_w = (1,0)$. Under strict (pointwise) permutation invariance, the embeddings of all nodes would be equivalent, obscuring the key distinction that $u$ and $v$ share identical features, whereas $w$ has a different feature. In contrast, distributional invariance ensures that the distributions of $\mR^{(0)}_u$, $\mR^{(0)}_v$, and $\mR^{(0)}_w$ are identical, yet individual forward passes yield different outcomes: given $\mC$, we have $\mR^{(0)}_u = \mR^{(0)}_v \neq \mR^{(0)}_w$. This property preserves the model's ability to distinguish between nodes $u$ and $v$ (which share the same features) and node $w$ (which has a different feature), while maintaining symmetry in the model's statistical behavior, thus striking a balance between permutation invariance and expressive power.

Next, we show that the NodeCov operators applied to the sequence $\{\mR^{(p)}\}_{p=0}^k$ (as defined in \Cref{eq:covp}) yield features that are also distributionally invariant.

\begin{restatable}[Distributional Invariance of Node Covariance Operators to Feature Permutation]{corollary}{distributionalnodecov}\label{cor:distributional_invariance_K}
Let $\mX \in \mathbb{R}^{n \times d}$ be node features, and $\mP \in \mathbb{R}^{d \times d}$ be any permutation matrix. Let $\mR^{(0)} = \mX\mC$ be the initial projected features.
Let $\mathcal{K} = \{\mK^{(p)}\}_{p=0}^k$ be the set of node-covariance operators, where $\mK^{(p)} = \text{NodeCov}(\mA^p \mR^{(0)})$ is computed using the deterministic function NodeCov (\Cref{eq:covp}), and $\mA$ is the adjacency matrix.
It follows directly from the distributional invariance of $\mR^{(0)}$ that the entire set of operators $\mathcal{K}$ is also invariant in distribution to permutations of the input features $\mX$. That is, if $\mathcal{\bar{K}}$ is the set of operators computed using $\mX\mP$ instead of $\mX$, then $\mathcal{K} \stackrel{d}{=} \mathcal{\bar{K}}$.
\end{restatable}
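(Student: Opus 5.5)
The corollary follows from \Cref{prop:distributional_invariance} by the standard fact that equality in distribution is preserved under measurable maps. The plan is to exhibit a single deterministic, measurable map $F$ that takes the projected features to the whole operator set. Applying the \emph{same} $F$ to $\mR^{(0)}$ and to $\bar{\mR}^{(0)}$ then gives $\mathcal{K}$ and $\bar{\mathcal{K}}$.

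\textbf{Step 1: construct $F$.} Fix $\mA$, $n$, $h$ and $k$, which do not depend on the input features or on their ordering. Define
\begin{align*}
F:\mathbb{R}^{n\times h}\to(\mathbb{R}^{n\times n})^{k+1},\qquad F(\mR)=\Bigl(\tfrac{1}{h}\mPi_c\mA^p\mR\,(\mPi_c\mA^p\mR)^T\Bigr)_{p=0}^{k}.
\end{align*}
Each component is a polynomial in the entries of $\mR$, so $F$ is continuous and hence Borel measurable. By \Cref{eq:covp}, $\mathcal{K}=F(\mR^{(0)})$ with $\mR^{(0)}=\mX\mC$. Likewise, $\bar{\mathcal{K}}=F(\bar{\mR}^{(0)})$ with $\bar{\mR}^{(0)}=\mX\mP\mC$. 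The point to check is that permuting the columns of $\mX$ does not change the graph, so $\mA$ and $\mPi_c$ are identical in the two computations. Only the argument of $F$ differs.

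\textbf{Step 2: pushforward.} By \Cref{prop:distributional_invariance}, $\mR^{(0)}\stackrel{d}{=}\bar{\mR}^{(0)}$. For any Borel set $B$ in the product space,
\begin{align*}
\Pr(F(\mR^{(0)})\in B)=\Pr(\mR^{(0)}\in F^{-1}(B))=\Pr(\bar{\mR}^{(0)}\in F^{-1}(B))=\Pr(F(\bar{\mR}^{(0)})\in B).
\end{align*}
Hence $\mathcal{K}\stackrel{d}{=}\bar{\mathcal{K}}$ \emph{jointly}, as a tuple, and not just marginally for each $p$.

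The joint statement is the only real subtlety, and it is handled by building $F$ on the whole tuple rather than on each $\mK^{(p)}$ separately. If needed, the proposition itself can be recalled quickly. Write $\mP\mC$ for the permuted projection; the rows of $\mC$ are permuted, and $\text{vec}(\mP\mC)=(\mI_h\otimes\mP)\text{vec}(\mC)$. The matrix $\mI_h\otimes\mP$ is orthogonal, so isotropy gives $\mP\mC\stackrel{d}{=}\mC$. Consequently $\mX\mP\mC\stackrel{d}{=}\mX\mC$.

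The same pushforward argument extends unchanged to the node representations $\mH^{(\ell)}$ for fixed weights. This holds provided the structural encoding $\mS$ depends only on $\mA$.
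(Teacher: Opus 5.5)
Your proof is correct and takes the same route as the paper, which likewise treats each $\mK^{(p)}$ as a deterministic function $g_p(\mR^{(0)})$ of the projected features, invokes \Cref{prop:distributional_invariance}, and then extends to the joint law by noting that all operators are computed from the same $\mR^{(0)}$. Your single tuple-valued map $F$, with the explicit pushforward computation over Borel sets, is a more precise version of that last joint-distribution step.
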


The significance of \Cref{prop:distributional_invariance} and \Cref{cor:distributional_invariance_K} is substantial: it guarantees that the complete statistical behavior of $\mR^{(0)}$ and the operators $\mK^{(p)}$ central to \ourmethod is fundamentally robust to arbitrary input feature ordering, directly addressing a key source of heterogeneity across graph datasets. This distributional invariance also extends to the hidden representations $\mH^{(\ell)}$, for all $\ell= 1 \ldots L$ derived from these operators, as shown in \Cref{thm:hidden_invariance} in \Cref{app:theory}.

The stochastic projection matrix $\mC$ plays a critical role beyond enabling the distributionally invariance properties discussed earlier; its use is intrinsically linked to the expressive capability of the learning framework. Training with node-covariance operators $\text{NodeCov}(\mR^{(0)})$ derived from these stochastic projections offers advantages over relying on a single, deterministically computed covariance operator, such as $\text{NodeCov}(\mX)$. While $\text{NodeCov}(\mX)$ provides a stable, pointwise feature-permutation invariant view of node similarities, it can obscure subtle but important distinctions between nodes. In contrast, individual stochastic realizations $\text{NodeCov}(\mR^{(0)}) = \text{NodeCov}(\mX\mC)$ (for a specific $\mC$) can preserve these finer-grained distinctions, providing richer and more varied signals to the GNN. \Cref{thm:expressivity} formalizes this concept by demonstrating that there exist instances where the stochastic operator $\text{NodeCov}(\mX\mC)$ can distinguish nodes that the deterministic operator $\text{NodeCov}(\mX)$ cannot.

\begin{restatable}[Distinguishability through $\mC$]{theorem}{expressivity}\label{thm:expressivity}
There exist node features $\mX \in \mathbb{R}^{n \times d}$, nodes $u, v \in V$ with $\mX_u \neq \mX_v$ such that $\text{NodeCov}(\mX)$ makes $u$, $v$ indistinguishable (automorphic), but $\text{NodeCov}(\mX\mC)$ (for a.s.\ all $\mC$) makes $u$, $v$ distinguishable (not automorphic).
\end{restatable}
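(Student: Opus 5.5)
The plan is to give an explicit small counterexample and use the simplest automorphism invariant, the diagonal of the operator. I read ``$u,v$ indistinguishable (automorphic) under an operator $\mK$'' as: there exists a node permutation $\sigma$ with $\sigma(u)=v$ such that $\mK_{\sigma(i)\sigma(j)} = \mK_{ij}$ for all $i,j$, i.e., $\mPi_\sigma \mK \mPi_\sigma^T = \mK$. Conversely, $u,v$ are distinguishable whenever no such $\sigma$ exists. Any such $\sigma$ must in particular satisfy $\mK_{uu} = \mK_{vv}$, so to prove distinguishability it suffices to show that the diagonal entries at $u$ and $v$ differ.

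\textbf{Construction.} I would take $n=4$, $d=2$, with rows $\mX_1 = (1,0)$, $\mX_2=(-1,0)$, $\mX_3=(0,1)$, $\mX_4=(0,-1)$, and $u=1$, $v=3$, so that $\mX_u\neq \mX_v$. The column means are zero, hence $\mPi_c\mX = \mX$. For the same reason $\mPi_c \mX\mC = \mX\mC$ for every $\mC$, so centering plays no role. Then
\[
\text{NodeCov}(\mX) = \tfrac12 \mX\mX^T,
\]
which is block diagonal with two identical $2\times 2$ blocks $\tfrac12\begin{pmatrix}1&-1\\-1&1\end{pmatrix}$ on nodes $\{1,2\}$ and $\{3,4\}$, and zero cross blocks. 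The permutation $\sigma=(1\,3)(2\,4)$ swaps the two blocks and therefore preserves the matrix. Hence $u$ and $v$ are automorphic under $\text{NodeCov}(\mX)$.

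\textbf{Stochastic case.} Let $\mM = \frac1h \mC\mC^T \in \mathbb{R}^{2\times 2}$, with entries $M_{ij} = \frac1h \langle \vc_i, \vc_j\rangle$, where $\vc_1,\vc_2$ are the rows of $\mC$. Then $\text{NodeCov}(\mX\mC) = \mX\mM\mX^T$. Its diagonal entries are
\[
\mK_{11}=\mK_{22}=M_{11}, \qquad \mK_{33}=\mK_{44}=M_{22}.
\]
An automorphism mapping node $1$ to node $3$ would force $M_{11}=M_{22}$, that is, $\|\vc_1\|^2 = \|\vc_2\|^2$. The rows $\vc_1,\vc_2$ are independent $\mathcal{N}(\mathbf 0,\mI_h)$ vectors, so $\|\vc_1\|^2-\|\vc_2\|^2$ is a difference of independent $\chi^2_h$ variables. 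It has an absolutely continuous distribution, so it equals zero with probability zero. Thus, for almost every $\mC$ and every $h\ge 1$, no automorphism maps $u$ to $v$, and $u,v$ are distinguishable.

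\textbf{Main obstacle.} The only delicate step is fixing what ``indistinguishable/automorphic'' means for a dense weighted operator, and checking that the diagonal-entry invariant really rules out every candidate permutation. Once the definition above is fixed, this is immediate. The mechanism is worth stating in the proof: $\text{NodeCov}(\mX)$ is invariant under orthogonal maps of feature space, and here the rotation exchanging $e_1$ and $e_2$ induces the node automorphism. A realized $\mC\mC^T$ is almost surely not a multiple of the identity, which breaks that symmetry. I would note that the symmetry is recovered only in expectation, where $\E[\mC\mC^T] = h\mI$.
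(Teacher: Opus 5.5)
Your proof is correct, but it differs from the paper's in the example and, more importantly, in the invariant you use to exclude automorphisms.

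The paper takes $n=d=3$ with rows $(1,0,1),(0,1,1),(1,1,0)$. There $\text{NodeCov}(\mX)$ has constant diagonal $2/9$ and constant off-diagonal $-1/9$, so every node permutation is an automorphism. For the stochastic operator, the paper only shows $\mK^{(0)}_{uw}\neq\mK^{(0)}_{vw}$ almost surely. It justifies this loosely, by saying that an inner product of two (dependent) continuous random vectors is a.s.\ nonzero. That rules out the transposition $(u\,v)$ fixing $w$. The other permutation sending $u\mapsto v$ (the 3-cycle) is dismissed only by remarking that $\mK^{(0)}$ ``will not exhibit the high degree of symmetry''.

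Your diagonal test $\mK_{uu}=\mK_{vv}$ is necessary for every $\sigma$ with $\sigma(u)=v$, so it handles all candidates at once. Your zero-mean columns, together with the independence of the rows of $\mC$, reduce the almost-sure claim to an exact statement about a difference of independent $\chi^2_h$ variables.

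The same test would also close the gap in the paper's example. The centered rows $a=(1/3,-2/3,1/3)$ and $b=(-2/3,1/3,1/3)$ satisfy $(a-b)\perp(a+b)$. Hence $h(\mK^{(0)}_{uu}-\mK^{(0)}_{vv})=(a-b)^T\mC\mC^T(a+b)$ is an inner product of two independent Gaussian vectors.

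What the paper's route buys is a generic-looking argument phrased for any $\mX_u\neq\mX_v$. Yours buys a complete, checkable proof. Your remark that a feature-coordinate swap induces the node automorphism, broken because $\mC\mC^T$ is a.s.\ not a multiple of $\mI$, applies equally to the paper's example: swapping the first two features swaps $u$ and $v$. It ties the result neatly to the expected orthogonal invariance theorem.

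One small point: your test counts diagonal (self-loop) weights in the automorphism condition. If one preferred to ignore them, the incident off-diagonal weights $\{-M_{11},\pm M_{12}\}$ at node $1$ versus $\{-M_{22},\pm M_{12}\}$ at node $3$ still force $M_{11}=M_{22}$, so the conclusion is unchanged.
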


Finally, while distributional invariance covers permutations, analyzing the expected operator reveals broader robustness to basis changes and identifies the structure captured on average, as we show next.

\begin{restatable}[Expected Invariance to Orthogonal Transformations]{theorem}{orthogonaltransformation}\label{thm:orthogonal_invariance}
Let $\mX \in \mathbb{R}^{n \times d}$ be node features, $\mQ \in \mathbb{R}^{d \times d}$ be an orthogonal matrix, and $h$ be the projection dimension. Consider a random projection matrix $\mC \in \mathbb{R}^{d \times h}$ with $\text{vec}(\mC) \sim \mathcal{N}(\mathbf{0}, \mI_{d h})$.
Let $\text{NodeCov}(\mR^{(0)}) = \frac{1}{h}(\mPi_c \mR^{(0)})(\mPi_c \mR^{(0)})^T$ be the Node Covariance operator (\Cref{eq:cov}), where $\mPi_c = \mI_n - \frac{1}{n}\mathbf{1}_n\mathbf{1}_n^T$ is the centering matrix.
Then, the expected Node Covariance computed from the stochastically projected features is invariant to the orthogonal transformation $\mQ$:
\begin{align}
\E_{\mC}[\text{NodeCov}(\mX \mQ \mC)] = \E_{\mC}[\text{NodeCov}(\mX\mC)] = \mPi_c \mX \mX^T \mPi_c
\end{align}
where the expectation $\E_{\mC}[\cdot]$ is over the random sampling of $\mC$, and $\mPi_c \mX \mX^T \mPi_c$ is the Gram matrix of the centered original features.
\end{restatable}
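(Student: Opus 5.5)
The plan is to compute $\E_{\mC}[\text{NodeCov}(\mY\mC)]$ for an arbitrary fixed matrix $\mY \in \mathbb{R}^{n\times d}$, and then specialize to $\mY = \mX$ and $\mY = \mX\mQ$. By linearity of expectation and because $\mPi_c$ and $\mY$ are deterministic,
\begin{align*}
\E_{\mC}[\text{NodeCov}(\mY\mC)] = \frac{1}{h}\,\mPi_c \mY \,\E_{\mC}[\mC\mC^T]\, \mY^T \mPi_c .
\end{align*}
The core step is therefore the second-moment identity $\E[\mC\mC^T] = h\,\mI_d$. It follows entrywise: $(\mC\mC^T)_{ij} = \sum_{k=1}^h \mC_{ik}\mC_{jk}$, and since the entries are i.i.d. $\mathcal{N}(0,1)$ we have $\E[\mC_{ik}\mC_{jk}] = \delta_{ij}$. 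Summing over $k$ gives $h\,\delta_{ij}$.

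Substituting this identity yields $\E_{\mC}[\text{NodeCov}(\mY\mC)] = \mPi_c \mY\mY^T \mPi_c$; the factor $1/h$ cancels exactly, so the result does not depend on $h$. Taking $\mY=\mX$ gives the right-hand equality. Taking $\mY=\mX\mQ$ and using orthogonality, $\mX\mQ(\mX\mQ)^T = \mX\mQ\mQ^T\mX^T = \mX\mX^T$, so both expectations coincide with $\mPi_c\mX\mX^T\mPi_c$, the Gram matrix of the centered features.

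There is an alternative route to the $\mQ$-invariance that I would mention as a remark. It shows that $\mQ\mC \stackrel{d}{=} \mC$, because the columns of $\mC$ are i.i.d. $\mathcal{N}(\mathbf{0},\mI_d)$ and isotropic Gaussians are rotation invariant. This gives the stronger statement $\text{NodeCov}(\mX\mQ\mC)\stackrel{d}{=}\text{NodeCov}(\mX\mC)$, which extends \Cref{cor:distributional_invariance_K} from permutations to arbitrary orthogonal $\mQ$. For the theorem as stated, however, the direct computation suffices.

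I do not expect a real obstacle. The only points needing care are that expectation commutes with the fixed linear maps (all entries are integrable quadratic forms in Gaussians), and that the normalization $1/h$ matches $\E[\mC\mC^T]=h\mI_d$.
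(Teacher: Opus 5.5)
Your proposal is correct and follows essentially the same route as the paper's proof: pull the deterministic factors $\mPi_c\mY$ out of the expectation, use $\E_{\mC}[\mC\mC^T] = h\mI_d$ (the paper obtains this by summing $\E[\vc_j\vc_j^T]=\mI_d$ over columns, you obtain it entrywise), and then cancel $\mQ\mQ^T = \mI_d$. Your side remark is also valid: since $\mQ\mC \stackrel{d}{=} \mC$, it gives full distributional invariance under any orthogonal $\mQ$, which is stronger than the expectation-level statement the theorem asks for.
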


\Cref{thm:orthogonal_invariance} demonstrates that the expected operator is agnostic to any choice of orthonormal basis (rotations, reflections, permutations) for the input features. Furthermore, identifying this stable expectation as the Gram matrix of centered original features ($\mPi_c \mX \mX^T \mPi_c$) reveals that \ourmethod, on average, recovers intrinsic, basis-invariant pairwise node similarities directly reflecting the original data structure, irrespective of the specific random projection used.

\subsection{Training Objective Upper Bound}\label{sec:objective}

\ourmethod computes the feature projection $\mR^{(0)}$ and node-covariance operator $\mK^{(0)} = \text{NodeCov}(\mX\mC)$ using a stochastic projection matrix $\mC$ sampled in each forward pass. We now validate this practical training approach by showing its connection to performance on the stable, expected final representation $\mathbb{E}_{\mC}[\mH^{(L)}]$, assuming common convexity conditions for the final prediction layer. 

\begin{restatable}[Loss Upper Bound]{theorem}{lossjensen}\label{thm:loss_jensen}
Let $\mH^{(L)} \in \mathbb{R}^{n \times h^{(L)}}$ be the final node representations computed by \ourmethod, dependent on the initial random projection $\mC$. Let $\phi: \mathbb{R}^{n \times h^{(L)}} \to \mathbb{R}^{n \times t} $ be the final prediction layer, and let $\mathcal{L}(\cdot, \mY)$ be the loss function comparing predictions to ground truth labels $\mY$.
Assume that the composite function $f(\mH^{(L)}) = \mathcal{L}(\phi(\mH^{(L)}), \mY)$ is convex with respect to the final node representations $\mH^{(L)}$.
Then, our stochastic optimization objective provides an upper bound for the loss of the expected representation: 
\begin{align} \label{eq:jensen_loss}
\underbrace{\mathcal{L}(\phi(\mathbb{E}_{\mC}[\mH^{(L)}]), \mY)}_{\text{Loss of Expected Representation}} \leq \underbrace{\mathbb{E}_{\mC}[\mathcal{L}(\phi(\mH^{(L)}), \mY)]}_{\text{Expected Loss (Training Objective)}}
\end{align}
where the expectation $\mathbb{E}_{\mC}[\cdot]$ is taken over the random projection matrix $\mC$.
\end{restatable}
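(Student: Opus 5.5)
The plan is to recognize \Cref{thm:loss_jensen} as a direct instance of Jensen's inequality, applied to the random variable $\mH^{(L)} = \mH^{(L)}(\mC)$ and the convex function $f(\mH) = \mathcal{L}(\phi(\mH), \mY)$. First I will make the setting precise. For fixed inputs $\mX$ and $\mA$, fixed learned weights $\{\mW^{(\ell,\mO)}\}$ and fixed structural encodings $\mS$, the map $\mC \mapsto \mH^{(L)}$ is a deterministic measurable function of $\mC$. It is built from matrix products, the centering $\mPi_c$, the NodeCov operation and the layer nonlinearities of \Cref{eq:propagation}. Hence $\mH^{(L)}$ is a random element of $\mathbb{R}^{n \times h^{(L)}}$, which I identify with $\mathbb{R}^{n h^{(L)}}$ via $\text{vec}$. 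Under this identification, convexity of $f$ on matrices coincides with convexity on the vectorized space.

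The second step checks integrability, so that both sides of \Cref{eq:jensen_loss} are well defined. This means $\mathbb{E}_{\mC}\|\mH^{(L)}\| < \infty$, so that $\mathbb{E}_{\mC}[\mH^{(L)}]$ exists. I will argue it from polynomial growth. $\mR^{(0)} = \mX\mC$ is linear in $\mC$, and each $\mK^{(p)}$ is quadratic in $\mC$. If the GNN layers use Lipschitz activations such as ReLU, then $\|\mH^{(L)}\|$ is bounded by a polynomial in $\|\mC\|$. All moments of a Gaussian matrix are finite, so the expectation exists. If the paper's intended reading is simply to assume these expectations exist, I will state that as a standing assumption instead. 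For the right-hand side, $\mathbb{E}_{\mC}[f(\mH^{(L)})]$ may be taken in the extended sense, because $f$ is bounded below by an affine function; that lower bound comes from the next step.

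The core step is Jensen's inequality in finite dimensions. I will prove it via a supporting hyperplane, to keep it self-contained. Let $\bar{\mH} = \mathbb{E}_{\mC}[\mH^{(L)}]$. Since $f$ is convex and finite on all of $\mathbb{R}^{n \times h^{(L)}}$, it has a subgradient $\mathbf{G}$ at $\bar{\mH}$. This gives
\begin{align*}
f(\mH^{(L)}) \ge f(\bar{\mH}) + \langle \mathbf{G}, \mH^{(L)} - \bar{\mH}\rangle_F
\end{align*}
pointwise in $\mC$. Taking expectations, the linear term vanishes by linearity of expectation. This yields $f(\bar{\mH}) \le \mathbb{E}_{\mC}[f(\mH^{(L)})]$, which is exactly \Cref{eq:jensen_loss} once $f$ is unpacked.

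The main, if mild, obstacle is the well-posedness in the second step: the existence of $\mathbb{E}_{\mC}[\mH^{(L)}]$ and the existence of a subgradient. The subgradient is guaranteed because $f$ is finite-valued and convex on the whole space, hence continuous. Integrability is the step that requires the growth argument or an explicit assumption. Everything else is linearity of expectation.
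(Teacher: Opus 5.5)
Your proposal is correct and follows the same route as the paper, which proves the result by applying Jensen's inequality directly to $Z=\mH^{(L)}$ and the convex function $f$, assuming $Z$ has finite expectation. Your subgradient derivation of Jensen and your integrability discussion (polynomial growth in $\mC$, and $f$ being bounded below by an affine function) spell out the well-posedness conditions that the paper leaves implicit.
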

This holds, for instance, if $\phi$ is a linear map or linear plus softmax, and $\mathcal{L}$ is cross-entropy or mean squared error. \Cref{thm:loss_jensen} provides theoretical support for training with stochastic projections. \Cref{eq:jensen_loss} establishes that the expected loss minimized during training (RHS) serves as an upper bound for the loss evaluated on the stable, expected final representation (LHS). Thus, minimizing the empirical average loss (approximating the RHS) acts as a theoretically sound surrogate objective, implicitly minimizing the loss associated with the expected representation, validating our stochastic approach.

\subsection{Conditions for Transferability and Operator Consistency}\label{sec:conditions}

Beyond invariance, achieving transfer across graphs with fundamentally different feature distributions ($\mX^{(1)}, \mX^{(2)}$ for graphs $G_1$, $G_2$) relies on the stability of the underlying structure captured by the expected operator, $\mathbb{E}_{\mC}[\mK^{(0)}] = \Pi_c \mX \mX^T \Pi_c$. We posit that such stability can arise when graphs share intrinsic properties.
Plausible scenarios where such stability in the expected operator might arise include graphs exhibiting similar relational structures tied to node features (e.g., comparable label homophily if features reflect labels), originating from a shared underlying generative process (e.g., common SBM or graphon influencing features), or possessing similar distributions of node roles (e.g., hubs, bridges) if features are role-informative. In these cases, even if the specific feature realizations differ, the resulting $\mPi_c \mX^{(i)} (\mX^{(i)})^T \mPi_c$ matrices may capture analogous relational structures.

For this potential transfer to be practically realized, the stochastic operator $\mK_h^{(0)}$ computed using a finite projection dimension $h$ must reliably estimate its expectation. This holds for large $h$.

\begin{restatable}[Consistency of Projected Node Covariance]{proposition}{projectedconsistency}\label{prop:projected_consistency}
Let $\mX \in \mathbb{R}^{n \times d}$ be node features. For a projection dimension $h$, let $\mC \in \mathbb{R}^{d \times h}$ be such that $\text{vec}(\mC) \sim \mathcal{N}(\mathbf{0}, \mI_{d h})$. Define the stochastic node-covariance operator $\mK_h^{(0)} = \text{NodeCov}(\mX\mC) = \frac{1}{h}(\mPi_c \mX\mC)(\mPi_c \mX\mC)^T$, where $\mPi_c$ is the centering matrix.
Then, $\mK_h^{(0)}$ converges in probability to its expected value as  $h \to \infty$:
\begin{align}
\mK_h^{(0)} \xrightarrow{p} \mathbb{E}_{\mC}[\mK_h^{(0)}] = \mPi_c \mX \mX^T \mPi_c \quad \text{as } h \to \infty.
\end{align}
\end{restatable}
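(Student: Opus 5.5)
The plan is to write $\mK_h^{(0)}$ as an empirical average of i.i.d.\ rank-one matrices and then apply the weak law of large numbers entrywise. Let $\vc_j \in \mathbb{R}^d$ denote the $j$-th column of $\mC$, so that the $\vc_j$ are i.i.d.\ $\mathcal{N}(\mathbf{0}, \mI_d)$ because $\text{vec}(\mC)$ is isotropic. Set $\mathbf{z}_j = \mPi_c \mX \vc_j \in \mathbb{R}^n$. Since $\mC\mC^T = \sum_{j=1}^h \vc_j \vc_j^T$, we obtain
\begin{align*}
\mK_h^{(0)} = \frac{1}{h}\mPi_c \mX \mC \mC^T \mX^T \mPi_c = \frac{1}{h}\sum_{j=1}^h \mathbf{z}_j \mathbf{z}_j^T .
\end{align*}
This exhibits $\mK_h^{(0)}$ as the sample mean of $h$ i.i.d.\ random matrices $\mathbf{z}_j\mathbf{z}_j^T$.

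Next I compute the common mean. We have $\E[\vc_j\vc_j^T] = \mI_d$, so $\E[\mathbf{z}_j\mathbf{z}_j^T] = \mPi_c \mX \mX^T \mPi_c$. By linearity, this also gives $\E_{\mC}[\mK_h^{(0)}] = \mPi_c\mX\mX^T\mPi_c$ for every $h$, consistent with Theorem~\ref{thm:orthogonal_invariance} taken with $\mQ = \mI$. In particular, the target of convergence does not depend on $h$.

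For convergence, each entry $(\mathbf{z}_j\mathbf{z}_j^T)_{uv} = z_{ju} z_{jv}$ is a product of two jointly Gaussian variables. It therefore has finite variance: by Isserlis' theorem, $\mathrm{Var}(z_{ju}z_{jv}) = \Sigma_{uu}\Sigma_{vv} + \Sigma_{uv}^2$ with $\mathbf{\Sigma} = \mPi_c\mX\mX^T\mPi_c$. Chebyshev's inequality then gives
\begin{align*}
\Pr\bigl(|(\mK_h^{(0)})_{uv} - \Sigma_{uv}| > \epsilon\bigr) \le \frac{\Sigma_{uu}\Sigma_{vv}+\Sigma_{uv}^2}{h\epsilon^2} \to 0 .
\end{align*}
A union bound over the finitely many $n^2$ entries, with $n$ fixed, upgrades entrywise convergence to convergence in probability in any matrix norm. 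Alternatively, the strong law of large numbers yields almost sure convergence directly.

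The main point needing care is that the target must not depend on $h$, and this is settled by the mean computation above. Nothing else is delicate: the variance bound only needs Gaussian fourth moments, and the union bound is valid because $n$ and $d$ are fixed as $h \to \infty$.
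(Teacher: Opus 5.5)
Your proof is correct and follows the paper's route. Like the paper, you write $\mK_h^{(0)}$ as the sample mean of the i.i.d.\ rank-one matrices $(\mPi_c\mX\vc_j)(\mPi_c\mX\vc_j)^T$, compute their common mean $\mPi_c\mX\mX^T\mPi_c$, and conclude entrywise convergence in probability for fixed $n$. The only difference is that you prove the weak-law step directly via Chebyshev with the Isserlis variance $\Sigma_{uu}\Sigma_{vv}+\Sigma_{uv}^2$, which gives an explicit $O(1/(h\epsilon^2))$ bound, whereas the paper simply cites the weak law of large numbers.
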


This consistency connects theory to practice. It shows that for a sufficiently large $h$, the operator accurately reflects the stable expected operator $\mPi_c \mX \mX^T \mPi_c$. Therefore, if two graphs have aligned expected operators (due to shared properties), using a large enough $h$ allows \ourmethod to effectively leverage these shared underlying structures, facilitating transfer across disparate feature spaces.

%\newpage

\begin{table*}[t]
    \centering
    \footnotesize
    \caption{{Performance of \ourmethod on pre-training datasets compared to \ourmethod-\textsc{specialized} which is trained separately on each individual dataset. \ourmethod maintains highly competitive performance.  }}
    \label{tab:common}
    \setlength{\tabcolsep}{3pt}
    \begin{adjustbox}{width=\textwidth}
    \begin{tabular}{lccccccccc}
    \toprule
    Method & \textsc{zinc} & \textsc{molesol} & \textsc{molhiv} & \textsc{moltox21} & \textsc{mnist} & \textsc{cifar10} & \textsc{ModelNet} & \textsc{Cuneiform} & \textsc{MSRC 21} \\
    & (\textsc{mae} $\downarrow$) & (\textsc{rmse} $\downarrow$) & (\textsc{roc-auc} $\uparrow$) & (\textsc{roc-auc} $\uparrow$) & (\textsc{acc} $\uparrow$) & (\textsc{acc} $\uparrow$) & (\textsc{acc} $\uparrow$) & (\textsc{acc} $\uparrow$) & (\textsc{acc} $\uparrow$) \\
    \midrule
    \textbf{\textsc{Trained per dataset}}\\
    $\,$ \ourmethod-\textsc{specialized} (0 props) & 0.1480 & 1.22 & 72.65 & 69.37 & 94.03 & 39.96 & 37.24 & 85.19 & 91.65 \\
    $\,$ \ourmethod-\textsc{specialized} & 0.1195 & 1.19 & 73.78 & 70.04 & 94.77 & 40.03 & 39.81 & 87.20 & 94.16 \\ 
    \midrule
    \textbf{\textsc{Trained on all datasets}}\\
    $\,$ 
    \ourmethod (0 props) & 0.1557 & 1.28 & 72.74 & 68.19 & 94.57 & 40.11 & 37.11 & 89.88 & 97.51 \\
    $\,$ \ourmethod & 0.1237 & 1.29 & 74.49 & 68.20 & 95.22 & 40.08 &   39.37 & 91.17 & 98.08 \\
    \bottomrule
    \end{tabular}
    \end{adjustbox}
\end{table*}

\begin{table*}[t]
    \centering
    \scriptsize
    
    \caption{Performance on unseen node classification datasets with new input features. \ourmethod effectively transfers to new datasets with new features, often outperforming or matching SOTA.}
    \label{tab:foundation}
    \begin{tabular}{lccc}
    \toprule
         Method & \textsc{Cora} & \textsc{CiteSeer} & \textsc{PubMed} 
         \\
         & (\textsc{acc} $\uparrow$) & (\textsc{acc} $\uparrow$) & (\textsc{acc} $\uparrow$)\\
         \midrule
          {\textbf{\textsc{Non-Parametric Baselines}}} \\
          $\,$ \textsc{Label propagation}~\citep{zhu2002learning} &  69.20 $\pm$\phantom{1}0.00 & 51.30 $\pm$\phantom{1}0.00& 71.40 $\pm$\phantom{1}0.00\\
         \midrule
         \textbf{\textsc{Supervised Baselines}} \\
         $\,$ \textsc{MLP} &  48.42 $\pm$\phantom{1}0.63 & 48.56 $\pm$\phantom{1}0.27 & 66.26 $\pm$\phantom{1}1.53   
         \\
         $\,$ \textsc{GCN} \citep{kipf2017semisupervised} & 78.86 $\pm$\phantom{1}1.48 &   64.52 $\pm$\phantom{1}0.89 & 74.49 $\pm$\phantom{1}0.99  \\
         $\,$ \textsc{GIN}~\citep{xu2019how} & 67.10 $\pm$\phantom{1}3.00 & 58.80 $\pm$\phantom{1}2.20 & 68.40 $\pm$\phantom{1}2.70\\
        \midrule 

         \textbf{\textsc{LLM-augmented GNNs}}\\
         $\,$  \textsc{OFA}~\citep{liu2024one} & 76.10 $\pm$\phantom{1}4.11 & {73.04 $\pm$\phantom{1}2.88} & {75.61 $\pm$\phantom{1}5.06}  
         
         \\
         $\,$ \textsc{GLEM-LM}~\citep{chen2024exploring} & 67.55 $\pm$\phantom{1}3.53 & 66.00 $\pm$\phantom{1}5.66 & 62.12 $\pm$\phantom{1}0.07
         
         \\
         
         \midrule
         \textbf{\textsc{LLM-based}}\\
         $\,$ \textsc{GraphText}~\citep{zhao2023graphtext} & 75.41 $\pm$\phantom{1}2.08 & 58.24 $\pm$\phantom{1}0.26 & {63.70 $\pm$\phantom{1}0.29} 
         \\
        $\,$ \textsc{RWNN-Llama3-8B}~\citep{kim2024revisiting} & 72.29 & N/A & {N/A} \\
        
        \midrule
        \textbf{\textsc{GNN-based}}\\
         
        $\,$ \textsc{AnyGraph}~\citep{xia2024anygraph} & {62.60 $\pm$\phantom{1}0.14} & {19.32 $\pm$\phantom{1}0.37} & {70.73 $\pm$\phantom{1}4.13} 
        \\
        $\,$ \textsc{GraphAny}~\citep{zhao2024graphany} & 79.36 $\pm$\phantom{1}0.23 & {68.42 $\pm$\phantom{1}0.39} & {76.30 $\pm$\phantom{1}0.41} 
        \\
        $\,$ \textsc{MDGPT}~\citep{yu2024textfree} & {43.36 $\pm$\phantom{1}8.92} & {42.50 $\pm$\phantom{1}9.78} & {51.91 $\pm$\phantom{1}9.00} 
        \\
        $\,$ \textsc{GCOPE}~\citep{zhao2024oneforall} & {35.54 $\pm$\phantom{1}2.09} & {31.18 $\pm$\phantom{1}4.35} & {32.87 $\pm$\phantom{1}4.08} 
        \\
        $\,$ \textsc{GPPT}~\citep{sun2022gppt} & 43.15 $\pm$\phantom{1}9.44 & 37.26 $\pm$\phantom{1}6.17 & 48.31 $\pm$17.72 \\
        $\,$
        \textsc{All-In-One}~\citep{sun2023all} & 52.39 $\pm$10.17 & 40.41 $\pm$\phantom{1}2.80 & 45.17 $\pm$\phantom{1}6.45\\
        $\,$
        \textsc{GPrompt}~\citep{gong2024self} & 56.66 $\pm$11.22 & 53.21 $\pm$10.94 & 39.74 $\pm$15.35 \\
        $\,$
        \textsc{GPF}~\citep{fang2023universal} & 38.57 $\pm$\phantom{1}5.41 &  31.16 $\pm$\phantom{1}8.05&  49.99 $\pm$\phantom{1}8.86 \\
        $\,$
        \textsc{GPF-plus}~\citep{fang2023universal} & 55.77 $\pm$10.30 & 59.67 $\pm$11.87 & 46.64 $\pm$18.97\\
        $\,$
        \textsc{ULTRA (3g)}~\citep{galkin2024towards} & 79.40 $\pm$\phantom{1}0.00 & 67.40 $\pm$\phantom{1}0.00 & 77.90 $\pm$\phantom{1}0.00\\
        $\,$ \textsc{SCORE}~\citep{wang2024towards} & 81.80 $\pm$\phantom{1}1.02 & 71.33 $\pm$\phantom{1}0.27 & 82.93 $\pm$\phantom{1}0.55\\
    $\,$  OpenGraph \citep{xia2024opengraph} & N/A & 58.58 & 58.40
        \\
        $\,$ 
        RiemannGFM~\citep{sun2025riemanngfm} & N/A & 66.38 & 76.20 \\ 
 $\,$  AutoGFM~\citep{chen2025autogfm} & 80.32 $\pm$1.12 & N/A & 78.28 $\pm$1.40
        \\
        \midrule
    
    $\,$ \ourmethod (0 props) & 79.26 $\pm$ 1.08 &  65.96 $\pm$ 1.25  & 77.30 $\pm$ 0.47  \\ 
    $\,$ \ourmethod  & 82.13 $\pm$ 0.97 & 69.12 $\pm$ 0.89 & 78.03 $\pm$ 0.82    \\ 
    \bottomrule
         
    \end{tabular}
\end{table*}

\section{Experiments}
\label{sec:experiments}
In this section, we empirically evaluate the ability of \ourmethod to learn transferable representations from diverse graph datasets, and, critically, its capability to generalize to new datasets presenting entirely new input features. Our experiments are designed to answer two primary research questions:
\begin{enumerate}[label=(\textbf{Q\arabic*})]
    \item How does a single \ourmethod model, pre-trained jointly on a diverse collection of graph datasets (each with its own input features and task), perform on these individual source datasets compared to {training a separate model for each dataset}?
    \item How effectively do the representations learned by a pre-trained \ourmethod model transfer to new, unseen datasets that may have entirely different input features and downstream tasks?
\end{enumerate}
Next, we report our main experiments and refer to \Cref{app:results} for additional results (including time complexity). Implementation details, dataset statistics, and hyperparameter configurations are in \Cref{app:implementation,sec:app:data}.

\subsection{Performance on Pre-training Source Datasets (A1)}\label{subsection:pretraining}
In this subsection, we assess the ability of \ourmethod to learn from a wide array of source datasets simultaneously, without significant performance degradation on the individual datasets it was pre-trained on. This is needed for establishing its viability to obtain general-purpose pre-trained representations.

{To test this, w}e pre-train a single \ourmethod encoder on a diverse corpus of nine graph datasets, encompassing molecular data (\textsc{zinc}~\citep{dwivedi2023benchmarking}, \textsc{ogbg-molhiv}~\citep{hu2020open}, \textsc{ogbg-molesol}~\citep{hu2020open}, \textsc{ogbg-moltox21}~\citep{hu2020open}), computer vision derived graphs (\textsc{mnist}~\citep{dwivedi2023benchmarking}, \textsc{cifar10}~\citep{dwivedi2023benchmarking}, \textsc{Cuneiform}~\citep{MorrisTUD}, \textsc{MSRC 21}~\citep{MorrisTUD}), and 3d shape (\textsc{ModelNet}~\citep{wu20153d}) with varying tasks (classification and regression) and heterogeneous input features (differing dimensionalities, types, value ranges and semantics). For each dataset-task pair, a dedicated prediction head is attached to the shared \ourmethod component and trained to predict the corresponding target.
{We compare this single, jointly-trained model against its specialist counterparts: nine separate instances of the \ourmethod architecture, each trained from scratch on only one of the source datasets (\ourmethod-\textsc{specialized}).}

{\textbf{Results and Discussion.} 
\Cref{tab:common} confirms that \ourmethod not only successfully operates across datasets with heterogeneous features but is also highly effective, achieving performance competitive with, and at times superior to, specialized models. While the individually trained \ourmethod-\textsc{specialized} holds a slight edge on \textsc{zinc}, 
\textsc{molesol}, 
\textsc{moltox21}, 
and \textsc{ModelNet}, 
the jointly-trained \ourmethod demonstrates superior performance on the remaining 5 datasets. This advantage is particularly notable on \textsc{Cuneiform} (91.17\% vs. 87.20\%) and \textsc{MSRC 21} (98.08\% vs. 94.16\%), while also outperforming \ourmethod-\textsc{specialized} on \textsc{molhiv}, 
\textsc{mnist}, 
and \textsc{cifar10}. 
We also observe a clear advantage to using propagated operators, as the full model generally outperforms the (0 props) variant (a version computed without propagated covariance operators) across the tasks.

Overall, these results strongly indicate that a single, jointly pre-trained \ourmethod encoder can learn general-purpose representations from diverse data that remain highly competitive with, and in several cases surpass, those obtained when learning on a single task.
}

\subsection{Transferability to Unseen Datasets and Input Features (A2)}
\label{sec:exp:transfer}

This subsection assesses the central hypothesis underlying our research: namely, that a single, pre-trained \ourmethod model can effectively generalize to novel datasets characterized by distinct input features. To evaluate this hypothesis, we maintain the pre-trained \ourmethod encoder frozen, thereby ensuring that its learned representations remain unchanged. For each new dataset, which encompasses a range of node and graph-level tasks and introduces previously unseen input features and target label schemas, we instantiate and train a new prediction head using the frozen representations extracted by \ourmethod. This approach enables us to isolate the generalizability of \ourmethod's pre-trained representations, providing a test of its ability to adapt to unfamiliar data distributions.

\begin{wraptable}[22]{r}{0.55\linewidth} 
\vspace{-10pt}
    \centering
    \scriptsize
    \caption{Performance on unseen graph classification datasets with new input features. \ourmethod demonstrates strong transferability to graph-level tasks with new features, underscoring its versatility across different tasks and its ability to handle different features.}
    \begin{tabular}{lccc}
    \toprule
       Dataset  &   \textsc{MUTAG}  & \textsc{PROTEINS} \\
       & (\textsc{acc} $\uparrow$) & (\textsc{acc} $\uparrow$) \\
       \midrule
       \textbf{\textsc{Supervised Baselines}} & \\
       $\,$ MLP & 67.20 $\pm$\phantom{1}1.00 & 59.20 $\pm$\phantom{1}1.00\\
       $\,$ GIN~\citep{xu2019how} & 89.40 $\pm$\phantom{1}5.60 & 76.20 $\pm$\phantom{1}2.80\\
       \midrule
       \textbf{\textsc{LLM-augmented GNNs}} &\\
       $\,$ \textsc{OFA}~\citep{liu2024one}  & 61.04 $\pm$\phantom{1}4.71 & 61.40 $\pm$\phantom{1}2.99\\
       \midrule
       
       \textbf{\textsc{GNN-based}} & \\
       $\,$
       \textsc{MDGPT}~\citep{yu2024textfree} & 57.36 $\pm$14.26 & 54.35 $\pm$10.26\\
       $\,$ \textsc{GPPT~\citep{sun2022gppt}} & 60.40 $\pm$15.43  & 60.92 $\pm$\phantom{1}2.47 \\
       $\,$ \textsc{All-In-One}~\citep{sun2023all} & 79.87 $\pm$\phantom{1}5.34 & 66.49 $\pm$\phantom{1}6.26 \\
       $\,$ \textsc{GPrompt}~\citep{gong2024self} & 73.60 $\pm$\phantom{1}4.76 &  59.17 $\pm$11.26\\
       $\,$ \textsc{GPF}~\citep{fang2023universal} & 68.40 $\pm$\phantom{1}5.09 & 63.91 $\pm$\phantom{1}3.26 \\
       $\,$ \textsc{GPF-plus}~\citep{fang2023universal} & 65.20 $\pm$\phantom{1}6.94 &  62.92 $\pm$\phantom{1}2.78\\
       $\,$ \textsc{ULTRA(3g)}~\citep{galkin2024towards} & 63.33 $\pm$\phantom{1}0.00 & 58.09 $\pm$\phantom{1}0.00\\
       $\,$ \textsc{SCORE}~\citep{wang2024towards} & 85.33 $\pm$\phantom{1}2.11 & 68.54 $\pm$\phantom{1}1.47\\
       \midrule 
       $\,$ \ourmethod (0 props) &  92.50 $\pm$\phantom{1}6.60 &  76.72 $\pm$\phantom{1}3.19 \\
       $\,$ \ourmethod & 92.90 $\pm$\phantom{1}6.34 & 78.20 $\pm$\phantom{1}3.81 \\
       \bottomrule
    \end{tabular}
    \label{tab:graph_class}
\end{wraptable}

We compare \ourmethod against several categories of baselines: {(1) the non-parameteric baseline Label Propagation~\citep{zhu2002learning}, on node classification tasks where it is applicable;} (2) standard supervised GNNs trained from scratch on the target datasets; (3) LLM-augmented GNNs; (4) LLM-based methods; and (5) other GNN-based foundation models or transfer learning approaches. We adhere to their prescribed protocols for adaptation on new datasets. We refer the reader to \Cref{app:results} for this categorization.

\textbf{Results and Discussions.} \ourmethod demonstrates robust transferability across both node-level (\Cref{tab:foundation}) and graph-level (\Cref{tab:graph_class}) tasks on unseen datasets with new input features. \ourmethod not only significantly surpasses the performance of standard supervised GNNs trained from scratch on these target datasets, but also outperforms recent state-of-the-art graph foundation models.
On node classification benchmarks (\Cref{tab:foundation}),
\ourmethod consistently demonstrates strong transfer capabilities. 
For instance, on \textsc{Cora}~\citep{kipf2017semisupervised}, it obtains an accuracy of 82.13\% which not only surpasses standard supervised \textsc{GCN} (78.86\%), but it also exceeds leading baselines like \textsc{SCORE}~\citep{wang2024towards} (81.80\%) and \textsc{GraphAny}~\citep{zhao2024graphany} (79.36\%). 
This strong performance extends to graph classification tasks (\Cref{tab:graph_class}). On MUTAG~\citep{MorrisTUD},  \ourmethod achieves 92.90\% accuracy, exceeding both the supervised GIN baseline (89.40\%) and state of the art methods like \textsc{SCORE} (85.33\%) and \textsc{All-In-One}~\citep{sun2023all} (79.87\%). Furthermore, consistent with observations on the source datasets in \Cref{subsection:pretraining}, the inclusion of propagated covariance operators in \ourmethod  enhances transfer performance compared to \ourmethod (0 props).

These results provide evidence that a single pre-trained \ourmethod encoder produces effective, general-purpose representations. These representations readily adapt to both node and graph-level tasks on new datasets with new features, maintaining a versatility that provides a strong advantage over specialized models (\textsc{GraphAny}~\citep{zhao2024graphany}, \textsc{GraphText}~\citep{zhao2023graphtext}, \textsc{GCOPE}~\citep{zhao2024oneforall}, \textsc{AnyGraph}~\citep{xia2024anygraph}), only supporting node classification.

\section{Conclusion}
\label{sec:conclusion}

Input feature heterogeneity critically limits the development of Graph Foundation Models (GFMs). Our \ourmethod offers a theoretically-grounded solution, processing arbitrary node features through stochastic projections and node-covariance operators to build robust representations independent of the original feature space. We prove that these representations achieve distributional invariance to input feature permutations, and their underlying expected operator is invariant to orthogonal basis changes, thereby helping capture robust intrinsic structures of the data. The empirical transfer performance of \ourmethod across new datasets with disparate features demonstrates its potential to mitigate the challenges posed by feature heterogeneity, contributing to the development of GFMs.

\clearpage

\textbf{Limitations and Future Work.} 
The scalability of \ourmethod on extremely large graphs may be constrained by its dense covariance operators, in case direct access to the covariance operators are required, similarly to graph transformers; developing sparse approximations presents a key avenue for future research. Another promising direction involves exploring structured or learnable input feature projections as alternatives to the random Gaussian projections. Notably, as discussed in \Cref{app:complexity}, in common GNNs, we can avoid the storage of dense covariance operators, thereby achieving improved scalability.

{
\paragraph{Reproducibility Statement.}
Our code is available at \url{https://github.com/MosheEliasof/ALLIN}. We carefully document dataset details in \Cref{sec:app:data} and implementation details in \Cref{app:implementation}.

\paragraph{Ethics Statement.}
Our work is primarily methodological and presents minimal direct ethical concerns. All experiments are conducted on publicly available benchmark datasets widely used in the graph machine learning community, and we have used these datasets in accordance with their established licensing and terms of use. While our contribution is foundational, we advocate for the responsible application of transferable graph models. We caution against their use in analyzing sensitive social or personal data without appropriate safeguards and ethical oversight.

\paragraph{Usage of Large Language Models in This Work.}
LLMs were used in this work for text editing suggestions. All concepts, theoretical analysis, code development, and original writing were carried out by the authors.

%\paragraph{Acknowledgments.}
%\textcolor{red}{Add acknowledgement here}
%}

% \clearpage
% \newpage

\bibliographystyle{iclr2026_conference}
\bibliography{references}

% \clearpage
\newpage

\appendix

\section{Additional Related Work}

\paragraph{Generalization Theory of MPNNs.} Significant theoretical progress has advanced our understanding of generalization in Message Passing Neural Networks (MPNNs). As discussed in recent surveys~\citep{vasileiou2025survey,zhang2025surveydeepgraphlearning}, these efforts often focus on how architectures and graph properties (such as maximum degree) influence the generalization gap, employing analytical tools like Rademacher complexity and PAC-Bayesian analysis to derive performance bounds \citep{garg2020generalization, liao2021pac}. Other lines of work, leveraging concepts like covering numbers or graphon theory, investigate model stability and generalization under shifts in graph structure or topology, particularly in large-scale or evolving graph scenarios~\citep{levie2023graphon, vasileiou2024covered}. While these foundational theories provide important insights into GNN expressivity and their ability to generalize, especially concerning structural variations, they typically assume a consistent definition of the input feature space across different graphs. The cross-dataset generalization challenge that \ourmethod addresses is distinct: we specifically tackle scenarios where graphs present node features from entirely different feature spaces, potentially varying in both the number of available features (dimensionality) and their semantic meaning between train (source) and test (target) graphs. Our theoretical framework (\Cref{sec:theory}) therefore focuses on establishing principles for robustness and transferability under such input feature space heterogeneity, aiming to complement existing generalization theories that predominantly address structural changes.

\paragraph{Additional Efforts towards Graph Foundation Models.} Another significant challenge in graph transfer learning arises in settings like heterogeneous knowledge graphs, where models must generalize to unseen entities and relation types. Approaches such as ISDEA+~\citep{gao2023double} and MTDEA~\citep{zhou2023multi} tackle this by employing set aggregation techniques over representations specific to edge types, aiming for equivariance to permutations of these types, supported by a ``double equivariance'' theoretical framework. Similarly, methods like InGram~\citep{lee2023ingram}, ULTRA~\citep{galkin2024towards}, TRIX~\citep{zhang2024trix}, and MOTIF~\citep{huang2025expressive} construct explicit ``relation graphs'' to model interactions among different edge types. These works provide valuable solutions for structural and relational heterogeneity. In contrast, \ourmethod primarily addresses the distinct challenge of heterogeneity in input features, that is, varying feature dimensionalities and semantics across graphs. While the aforementioned methods focus on generalizing over graph schema and relation types (often assuming node features are not present), \ourmethod directly processes arbitrary node features to derive transferable node-covariance operators and representations. Other efforts in graph representation learning aim for transferability across diverse graph tasks. For example, HoloGNN~\citep{bevilacqua2025holographic} proposes a framework to learn node representations that can be applied to various downstream tasks on a given graph or graphs. However, such approaches typically assume that the underlying node feature space remains consistent across these tasks. \ourmethod, conversely, is specifically designed to address the challenge of generalizing to new and unseen datasets where the node features themselves can differ fundamentally in dimensionality and semantics, a problem distinct from task-level transfer within a fixed feature domain.

\section{Additional Theoretical Considerations and Proofs}\label{app:theory}

\distributional*
\begin{proof}
Let $\mC$ have columns $\vc_1, \dots, \vc_h$. Since the entries $C_{ik}$ are i.i.d $\mathcal{N}(0, 1)$, each column $\vc_j \sim \mathcal{N}(\mathbf{0}, \mI_d)$ and the columns are mutually independent.

Consider the matrix $\bar{\mC} = \mP^T \mC$. Since $\mP$ is a permutation matrix, $\mP^T$ is also a permutation matrix and is orthogonal, that is $\mP^T (\mP^T)^T = \mP^T \mP = \mI_d$.

The columns of $\bar{\mC}$ are $\bar{\vc}_j = \mP^T \vc_j$. Since $\vc_j \sim \mathcal{N}(\mathbf{0}, \mI_d)$ and $\mP^T$ is orthogonal, then
\begin{align}
    \bar{\vc}_j \sim \mathcal{N}(\mP^T\mathbf{0}, \mP^T \mI_d (\mP^T)^T) = \mathcal{N}(\mathbf{0}, \mP^T \mP) = \mathcal{N}(\mathbf{0}, \mI_d)
\end{align}

Furthermore, since $\vc_1, \dots, \vc_h$ are independent, the transformed columns $\bar{\vc}_1, \dots, \bar{\vc}_h$ are also independent. Thus, the matrix $\bar{\mC}$ has the same distribution as $\mC$, i.e., $\bar{\mC} \stackrel{d}{=} \mC$.

Now consider $\bar{\mR}^{(0)} = (\mX\mP)\mC$. Since $\mC \stackrel{d}{=} \bar{\mC}$, we can write:
$$\bar{\mR}^{(0)} \stackrel{d}{=} (\mX\mP)\bar{\mC}$$
Substitute $\bar{\mC} = \mP^T \mC$:
$$\bar{\mR}^{(0)} \stackrel{d}{=} (\mX\mP)(\mP^T \mC) = \mX(\mP\mP^T)\mC$$
Since $\mP$ is orthogonal, $\mP\mP^T = \mI_d$.
$$\bar{\mR}^{(0)} \stackrel{d}{=} \mX \mI_d \mC = \mX\mC = \mR$$
Thus, $\mR$ and $\bar{\mR}^{(0)}$ are equal in distribution.
\end{proof}

\distributionalnodecov*
\begin{proof}
Let $g_p(\mR^{(0)}) = \text{NodeCov}(\mA^p \mR^{(0)})$ be the deterministic function that computes the p-th order node covariance operator from the initial projected features $\mR^{(0)}$. From \Cref{prop:distributional_invariance}, we have $\mR^{(0)} \stackrel{d}{=} \bar{\mR}^{(0)}$.
Since applying a deterministic function $g_p$ to random variables that are equal in distribution results in outputs that are equal in distribution, we have $g_p(\mR^{(0)}) \stackrel{d}{=} g_p(\bar{\mR}^{(0)})$, which means $\mK^{(p)} \stackrel{d}{=} \bar{\mK}^{(p)}$ for each $p = 0 \ldots k$.
Furthermore, since all operators $\mK^{(p)}$ in $\mathcal{K}$ are derived from the same $\mR^{(0)}$, and all operators $\bar{\mK}^{(p)}$ in $\mathcal{\bar{K}}$ are derived from $\bar{\mR}^{(0)}$, the distributional equality extends to the joint distribution of the sets:
$\mathcal{K} \stackrel{d}{=} \mathcal{\bar{K}}$.
\end{proof}

\begin{restatable}[Distributional Invariance of Hidden Representations to Input Permutation]{theorem}{hiddeninvariance}\label{thm:hidden_invariance}
Let $\mX \in \mathbb{R}^{n \times d}$ be node features, and $\mP \in \mathbb{R}^{d \times d}$ be any permutation matrix. Let $\mR^{(0)} = \mX\mC$ be the initial projected features, and $\mathcal{K} = \{\mK^{(p)}\}_{p=0}^k$ be the set of node-covariance operators. 
Let the initial hidden representation be $\mH^{(0)} = \mR^{(0)} \oplus \mS$, where $\mS$ is a structural encoding matrix independent of $\mX$.
Subsequent hidden representations $\mH^{(\ell)}$ for $\ell = 1, \ldots, L$ are computed by a deterministic GNN layer function.

The initial hidden representation $\mH^{(0)}$ and all subsequent hidden representations $\mH^{(\ell)}$ for $\ell = 1, \ldots, L$ are invariant in distribution to permutations of the input features $\mX$. That is, if $\bar{\mH}^{(\ell)}$ are the representations computed using $\mX\mP$ instead of $\mX$, then $\mH^{(\ell)} \stackrel{d}{=} \bar{\mH}^{(\ell)}$ for all $\ell$.
\end{restatable}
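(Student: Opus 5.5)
The plan is to reduce everything to \Cref{prop:distributional_invariance} by writing the whole forward pass as one deterministic map of the random projection $\mR^{(0)}$. Fix the graph, meaning $\mA$, the structural encoding $\mS$ (which depends only on the graph and not on $\mX$), and all learnable weights $\mW^{(\ell,\mO)}$. For each $\ell$ we want a deterministic (measurable) function $F_\ell$ with
\[
\mH^{(\ell)} = F_\ell(\mR^{(0)}), \qquad \bar{\mH}^{(\ell)} = F_\ell(\bar{\mR}^{(0)}),
\]
using the same $F_\ell$ in both cases. The key point is that $\mX$ enters the computation only through $\mR^{(0)} = \mX\mC$.

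We build the $F_\ell$ by induction on $\ell$.

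For $\ell = 0$, set $F_0(\mR) = \mR \oplus \mS$. This is deterministic because $\mS$ is fixed.

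For the inductive step, note the following:

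\begin{enumerate*}[label=(\roman*)]
\item The operator set $\mathcal{O}$ is the image of $\mR^{(0)}$ under a deterministic map $G(\mR) = \{\mI,\mA,\text{NodeCov}(\mA^0\mR),\ldots,\text{NodeCov}(\mA^k\mR)\}$, exactly as in the proof of \Cref{cor:distributional_invariance_K}.
\item Each $\text{GNNLayer}^{(\ell,\mO)}$ is a deterministic function of $(\mH^{(\ell-1)},\mO)$.
\item Concatenation is deterministic.
\end{enumerate*}

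Together these give
\[
F_\ell(\mR) = \bigoplus_{\mO \in G(\mR)} \text{GNNLayer}^{(\ell,\mO)}\bigl(F_{\ell-1}(\mR),\mO\bigr),
\]
a composition of measurable maps, hence measurable.

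\Cref{prop:distributional_invariance} gives $\mR^{(0)} \stackrel{d}{=} \bar{\mR}^{(0)}$. A deterministic map pushes equal distributions to equal distributions, so $\mH^{(\ell)} = F_\ell(\mR^{(0)}) \stackrel{d}{=} F_\ell(\bar{\mR}^{(0)}) = \bar{\mH}^{(\ell)}$ for every $\ell$. Applying the same argument to the tuple map $\mR \mapsto (F_0(\mR),\ldots,F_L(\mR))$ yields the stronger joint statement as well.

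The main subtlety is the inductive step. One might try to argue separately that $\mH^{(\ell-1)}$ and the operators $\mathcal{K}$ are each invariant in distribution and then combine them. That fails, because marginal equalities in distribution do not imply equality of the joint law of $(\mH^{(\ell-1)},\mathcal{K})$, and the layer acts on that pair jointly. Routing everything through the single random object $\mR^{(0)}$ avoids the issue; this is why the induction tracks the functions $F_\ell$ rather than distributional equalities. One should also state explicitly that the weights are held fixed (not resampled) and that $\mS$ does not depend on $\mX$, since either assumption failing would break the argument.
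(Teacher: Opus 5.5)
Your proposal is correct and follows essentially the same route as the paper. The paper also inducts on $\ell$, and in the inductive step it writes the pair $(\mH^{(\ell-1)},\mathcal{O})$ as a single deterministic function $J$ of $\mR^{(0)}$ (with $\mS$, $\mA$, and the weights fixed), so that \Cref{prop:distributional_invariance} gives equality of the joint law before the layer map is applied; your functions $F_\ell$ package the same argument more cleanly, and you correctly identify why marginal invariance of $\mH^{(\ell-1)}$ and $\mathcal{K}$ alone would not suffice.
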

\begin{proof}
We proceed by induction on the layer index $\ell$.

\textbf{Base Case ($\ell = 0$).} Let $\mR^{(0)} = \mX\mC$ and $\bar{\mR}^{(0)} = (\mX\mP)\mC$. The initial hidden representations are $\mH^{(0)} = \mR^{(0)} \oplus \mS$ and $\bar{\mH}^{(0)} = \bar{\mR}^{(0)} \oplus \mS$.
From \Cref{prop:distributional_invariance}, we know that $\mR^{(0)} \stackrel{d}{=} \bar{\mR}^{(0)}$.
Since the structural encoding $\mS$ is assumed independent of $\mX$ (and thus fixed with respect to the permutation $\mP$), and the concatenation operation $\oplus$ is a deterministic function, applying this function preserves the distributional equality.
Therefore, $\mH^{(0)} = \mR^{(0)} \oplus \mS \stackrel{d}{=} \bar{\mR}^{(0)} \oplus \mS = \bar{\mH}^{(0)}$. The base case holds.

\textbf{Inductive Hypothesis.} Assume that for some layer $\ell - 1 \geq 0$, the hidden representations are equal in distribution: $\mH^{(\ell-1)} \stackrel{d}{=} \bar{\mH}^{(\ell-1)}$.

\textbf{Inductive Step (Layer $\ell$).} The hidden representations at layer $\ell$ are computed as:
$$\mH^{(\ell)} = F_\ell(\mH^{(\ell-1)}, \mathcal{O})$$
$$\bar{\mH}^{(\ell)} = F_\ell(\bar{\mH}^{(\ell-1)}, \bar{\mathcal{O}})$$
where $F_\ell$ represents the deterministic computation performed by the $\ell$-th GNN layer (given fixed learned weights), $\mathcal{O} = \{\mI, \mA\} \cup \mathcal{K}$ with $\mathcal{K} = \{\text{NodeCov}(\mA^p \mR^{(0)})\}_{p=0}^k$, and $\bar{\mathcal{O}} = \{\mI, \mA\} \cup \bar{\mathcal{K}}$ with $\bar{\mathcal{K}} = \{\text{NodeCov}(\mA^p \bar{\mR}^{(0)})\}_{p=0}^k$.

From \Cref{cor:distributional_invariance_K}, we know that the set of random operators $\mathcal{K}$ is equal in distribution to  $\mathcal{\bar{K}}$, i.e., $\mathcal{K} \stackrel{d}{=} \mathcal{\bar{K}}$. Since $\mI$ and $\mA$ are fixed, the full set of operators used by the layer also satisfies $\mathcal{O} \stackrel{d}{=} \mathcal{\bar{O}}$.

Now consider the inputs to the function $F_\ell$. The pair $(\mH^{(\ell-1)}, \mathcal{O})$ determines $\mH^{(\ell)}$, and the pair $(\bar{\mH}^{(\ell-1)}, \bar{\mathcal{O}})$ determines $\bar{\mH}^{(\ell)}$.
Both $\mH^{(\ell-1)}$ and $\mathcal{O}$ are deterministic functions of the initial projection $\mR^{(0)}$ (and fixed elements $\mS, \mA, \mI$, and layer weights). Let $J$ be the function representing the computation up to layer $\ell - 1$ and the computation of operators, such that $(\mH^{(\ell-1)}, \mathcal{O}) = J(\mR^{(0)}, \mS, \mA, \mI, \text{Weights})$
Similarly, $(\bar{\mH}^{(\ell-1)}, \bar{\mathcal{O}}) = J(\bar{\mR}^{(0)}, \mS, \mA, \mI, \text{Weights})$.

Since $\mR^{(0)} \stackrel{d}{=} \bar{\mR}^{(0)}$ (\Cref{prop:distributional_invariance}) and $J$ is a deterministic function, it follows that the joint distribution of the outputs is preserved:
$$(\mH^{(\ell-1)}, \mathcal{O}) \stackrel{d}{=} (\bar{\mH}^{(\ell-1)}, \bar{\mathcal{O}})$$
This establishes that the inputs to the deterministic layer function $F_\ell$ are equal in distribution. Applying the deterministic function $F_\ell$ preserves this equality:
$$\mH^{(\ell)} = F_\ell(\mH^{(\ell-1)}, \mathcal{O}) \stackrel{d}{=} F_\ell(\bar{\mH}^{(\ell-1)}, \bar{\mathcal{O}}) = \bar{\mH}^{(\ell)}$$
Thus, the inductive step holds.
\end{proof}

\expressivity*
\begin{proof}
We will show that there exists $\mX$, $u$, $v$ such that
\begin{enumerate*}[label=(\arabic*)]
    \item  nodes $u$ and $v$ are automorphic within $\text{NodeCov}(\mX)$, and consequently, the GNN, when using $\text{NodeCov}(\mX)$ as the operator and identical initial embeddings, produces identical final representations for these nodes.
    \item For the same $\mX$, with probability 1 (over the draw of $\mC$), nodes $u$ and $v$ are \textbf{not} automorphic and therefore distinguishable in $\text{NodeCov}(\mX\mC)$. 
\end{enumerate*}
We provide a constructive example. Let $n=3$ nodes $\{u,v,w\}$ and $d=3$ features. Consider the feature matrix $\mX$:
$$\mX = \begin{pmatrix} \mX_u^T \\ \mX_v^T \\ \mX_w^T \end{pmatrix} = \begin{pmatrix} 1 & 0 & 1 \\ 0 & 1 & 1 \\ 1 & 1 & 0 \end{pmatrix}$$
Here, $\mX_u = (1,0,1)^T$, $\mX_v = (0,1,1)^T$, and $\mX_w = (1,1,0)^T$. Clearly, $\mX_u \neq \mX_v$.

\textbf{Proof for item (1).}
The column means of $\mX$ are $\bar{\mX}_{\text{col}} = (2/3, 2/3, 2/3)^T$.
The centered feature matrix $\mX_c = \mPi_c \mX = \mX - \mathbf{1}_3 \bar{\mX}_{\text{col}}^T$ is:
$$\mX_c = \begin{pmatrix} 1/3 & -2/3 & 1/3 \\ -2/3 & 1/3 & 1/3 \\ 1/3 & 1/3 & -2/3 \end{pmatrix}$$
Then $$\text{NodeCov}(\mX) = \begin{pmatrix} 2/9 & -1/9 & -1/9 \\ -1/9 & 2/9 & -1/9 \\ -1/9 & -1/9 & 2/9 \end{pmatrix}.$$
In the weighted graph defined by $\text{NodeCov}(\mX)$, all nodes are automorphic to each other. 
If a GNN uses $\text{NodeCov}(\mX)$ as its feature-derived operator and starts with identical initial embeddings for all nodes, standard message passing layers will preserve this symmetry, leading to identical final representations $\mH_u^{(L)} = \mH_v^{(L)} = \mH_w^{(L)}$. Thus, such a GNN cannot distinguish $u$ from $v$.

\textbf{Proof for item (2). } Let $\mR^{(0)} = \mX\mC$. The rows of $\mR^{(0)}$ are $\mR^{(0)}_u = \mX_u^T\mC$, $\mR^{(0)}_v = \mX_v^T\mC$, $\mR^{(0)}_w = \mX_w^T\mC$. Since $\mX_u \neq \mX_v$ and $\mC$ is drawn from a continuous distribution (Gaussian entries), $\mX_u^T\mC \neq \mX_v^T\mC$ with probability 1. Thus, $\mR^{(0)}_u \neq \mR^{(0)}_v$ almost surely.
Let $\mR^{(0)}_c = \mPi_c \mR^{(0)}$. The rows of $\mR_c$ are $\mR^{(0)}_{c,u}, \mR^{(0)}_{c,v}, \mR^{(0)}_{c,w}$. Since $\mR^{(0)}_u \neq \mR^{(0)}_v$, it follows that $\mR^{(0)}_{c,u} \neq \mR^{(0)}_{c,v}$ almost surely (unless $\mPi_c$ projects their difference to zero, which is a measure zero event for a fixed $\mX$ and random $\mC$).
The operator is $\mK^{(0)} =
 \text{NodeConv}(\mX\mC) = \frac{1}{h}\mR_c \mR_c^T$.
An element $(\mK^{(0)})_{ij} = \frac{1}{h} \mR^{(0)}_{c,i} \cdot \mR^{(0)}_{c,j}$.
Consider the specific symmetry that existed for $\text{NodeConv}(\mX)$, e.g., $(\text{NodeConv}(\mX))_{uw} = (\text{NodeConv}(\mX))_{vw} = -1/9$.
For $\mK^{(0)}$, we compare $(\mK^{(0)})_{uw} = \frac{1}{h} \mR^{(0)}_{c,u} \cdot \mR^{(0)}_{c,w}$ and $(\mK^{(0)})_{vw} = \frac{1}{h} \mR^{(0)}_{c,v} \cdot \mR^{(0)}_{c,w}$.
These are equal if $(\mR^{(0)}_{c,u} - \mR^{(0)}_{c,v}) \cdot \mR^{(0)}_{c,w} = 0$.
Since $\mR^{(0)}_{c,u} - \mR^{(0)}_{c,v} \neq \mathbf{0}$ almost surely, and $\mR^{(0)}_{c,w}$ is a random vector (whose distribution depends on $\mC$), the event that their dot product is exactly zero has probability 0 for continuous distributions unless one of them is deterministically zero (which is not the case here a.s.).
Therefore, with probability 1, $(\mK^{(0)})_{uw} \neq (\mK^{(0)})_{vw}$. This breaks the specific symmetry that made node u and node v have equivalent relational profiles to node w in $\text{NodeCov}(\mX)$.
More generally, the matrix $\mK^{(0)}$ will not, with probability 1, exhibit the high degree of symmetry found in $\text{NodeCov}(\mX)$ for this specific $\mX$. Thus, nodes $u$ and $v$ will generally not be automorphic with respect to $\mK^{(0)}$ in the same way they were for $\text{NodeCov}(\mX)$.
A GNN using this specific realization $\mK^{(0)}$ (and identical initial embeddings, can now potentially produce $\mH_u^{(L)} \neq \mH_v^{(L)}$ because the operator $\mK^{(0)}$ provides different relational information for $u$ and $v$. 

\end{proof}

\orthogonaltransformation*
\begin{proof}
Let $\mR^{(0)}  = \mX \mC$. Using the definition of the NodeCov operator and properties of the centering matrix $\mPi_c$:
\begin{align*}
\text{NodeCov}(\mR^{(0)} ) &= \frac{1}{h} (\mPi_c \mR^{(0)}) (\mPi_c \mR^{(0)} )^T \\
&= \frac{1}{h} \mPi_c (\mX \mC) (\mX \mC)^T \mPi_c^T \\
&= \frac{1}{h} \mPi_c \mX \mC \mC^T \mX^T \mPi_c
\end{align*}
Taking the expectation over $\mC$:
\begin{align*}
\E_{\mC}[\text{NodeCov}(\mX \mC)] &= \E_{\mC}\left[ \frac{1}{h} \mPi_c \mX \mC \mC^T \mX^T \mPi_c \right] \\
&= \frac{1}{h} \mPi_c \mX \E_{\mC}[\mC \mC^T] \mX^T \mPi_c \quad \text{(by linearity of expectation)}
\end{align*}
We evaluate $\E_{\mC}[\mC \mC^T]$. Let $\vc_j \in \mathbb{R}^d$ be the j-th column of $\mC$. Since the entries of $\mC$ are i.i.d. $\mathcal{N}(0,1)$, each column vector $\vc_j$ follows $\vc_j \sim \mathcal{N}(\mathbf{0}, \mI_d)$. Therefore, $E[\vc_j \vc_j^T] = \mI_d$. Using linearity of expectation:
\begin{align*}
\E_{\mC}[\mC \mC^T] = \E_{\mC}\left[\sum_{j=1}^h \vc_j \vc_j^T\right] = \sum_{j=1}^h \E_{\mC}[\vc_j \vc_j^T] = \sum_{j=1}^h \mI_d = h \mI_d
\end{align*}
Substituting this back:
\begin{align*}
\E_{\mC}[\text{NodeCov}(\mX \mC)] = \frac{1}{h} \mPi_c \mX (h \mI_d) \mX^T \mPi_c = \mPi_c \mX \mX^T \mPi_c
\end{align*}

Now consider the transformed features $\bar{\mX} = \mX \mQ$. Let $\bar{\mR}^{(0)}  = \bar{\mX} \mC = \mX \mQ \mC$. We compute $\E_{\mC}[\text{NodeCov}(\bar{\mR}^{(0)} )]$:
\begin{align*}
\text{NodeCov}(\bar{\mR}^{(0)}) &= \frac{1}{h} (\mPi_c \bar{\mR}^{(0)}) (\mPi_c \bar{\mR}^{(0)})^T \\
&= \frac{1}{h} \mPi_c (\mX \mQ \mC) (\mX \mQ \mC)^T \mPi_c \\
&= \frac{1}{h} \mPi_c \mX \mQ \mC \mC^T \mQ^T \mX^T \mPi_c
\end{align*}
Taking the expectation over $\mC$:
\begin{align*}
\E_{\mC}[\text{NodeCov}(\mX \mQ \mC)] &= \frac{1}{h} \mPi_c \mX \mQ \E_{\mC}[\mC \mC^T] \mQ^T \mX^T \mPi_c \\
&= \frac{1}{h} \mPi_c \mX \mQ (h \mI_d) \mQ^T \mX^T \mPi_c \quad \text{(using } E[\mC\mC^T] = h\mI_d \text{)} \\
&= \mPi_c \mX \mQ \mI_d \mQ^T \mX^T \mPi_c \\
&= \mPi_c \mX (\mQ \mQ^T) \mX^T \mPi_c \\
&= \mPi_c \mX \mI_d \mX^T \mPi_c \quad \text{(since } \mQ \text{ is orthogonal, } \mQ\mQ^T = \mI_d \text{)} \\
&= \mPi_c \mX \mX^T \mPi_c
\end{align*}
Thus, $\E_{\mC}[\text{NodeCov}(\mX \mQ \mC)] = \E_{\mC}[\text{NodeCov}(\mX\mC)] = \mPi_c \mX \mX^T \mPi_c$.
\end{proof}

\lossjensen*
\begin{proof}
The proof follows directly from Jensen's inequality for vector- or matrix-valued random variables.

Let the random variable be the final hidden representation $Z = \mH^{(L)}$, which is a function of the random projection matrix $\mC$.

By assumption, the function $f$ is convex with respect to its input argument $\mH^{(L)}$.
Jensen's inequality states that for a convex function $f$ and a random variable $Z$ with finite expectation, $f(\E[Z])\leq \E[f(Z)]$.
Applying this with $Z = \mH^{(L)}$ and the defined function $f$, we get:
\begin{align*}
    \mathcal{L}(\phi(\E_{\mC}[\mH^{(L)}]), \mY) \leq \E_{\mC}[\mathcal{L}(\phi(\mH^{(L)}), \mY)]
\end{align*}
which is the desired result.
\end{proof}

\projectedconsistency*
\begin{proof}
Let $\mC = [\vc_1, \dots, \vc_h]$ denote the random projection matrix, where each column $\vc_j \in \mathbb{R}^d$ is a random vector. Since the entries of $\mC$ are sampled i.i.d. from $\mathcal{N}(0, 1)$, the columns $\vc_j$ are independent and identically distributed according to $\vc_j \sim \mathcal{N}(\mathbf{0}, \mI_d)$.

The stochastic node-covariance operator $\mK_h^{(0)}$ (\Cref{eq:cov}) can be rewritten as:
\begin{align*}
\mK_h^{(0)} &= \frac{1}{h}(\mPi_c \mX\mC)(\mPi_c \mX\mC)^T \\
&= \frac{1}{h} (\mPi_c \mX [\vc_1, \dots, \vc_h]) (\mPi_c \mX [\vc_1, \dots, \vc_h])^T \\
&= \frac{1}{h} ([\mPi_c \mX \vc_1, \dots, \mPi_c \mX \vc_h]) ([\mPi_c \mX \vc_1, \dots, \mPi_c \mX \vc_h])^T \\
&= \frac{1}{h} \sum_{j=1}^h (\mPi_c \mX \vc_j) (\mPi_c \mX \vc_j)^T \quad \text{(using block matrix multiplication definition)}
\end{align*}

Let us define the random matrix $\mY_j \in \mathbb{R}^{n \times n}$ as:
$$\mY_j = (\mPi_c \mX \vc_j)(\mPi_c \mX \vc_j)^T$$
Since the columns $\vc_j$ are i.i.d. and $\mY_j$ is a fixed function of $\vc_j$ (given the fixed matrices $\mX$ and $\mPi_c$), the random matrices $\mY_1, \mY_2, \dots, \mY_h$ are also independent and identically distributed (i.i.d.).

The operator $\mK_h^{(0)}$ can thus be written as the sample mean of these i.i.d. random matrices:
$$\mK_h^{(0)} = \frac{1}{h} \sum_{j=1}^h \mY_j$$
Now, we compute the expected value of $\mY_j$. Using the linearity of expectation and the property that $\mPi_c$ and $\mX$ are constant with respect to the expectation over $\mC$ (and $\mPi_c = \mPi_c^T$):
\begin{align*}
\mathbb{E}[\mY_j] &= \mathbb{E}[(\mPi_c \mX \vc_j)(\mPi_c \mX \vc_j)^T] \\
&= \mathbb{E}[\mPi_c \mX \vc_j \vc_j^T \mX^T \mPi_c^T] \\
&= \mPi_c \mX \mathbb{E}[\vc_j \vc_j^T] \mX^T \mPi_c
\end{align*}
Since $\vc_j \sim \mathcal{N}(\mathbf{0}, \mI_d)$, we know that $\mathbb{E}[\vc_j \vc_j^T] = \text{Cov}(\vc_j) + \mathbb{E}[\vc_j]\mathbb{E}[\vc_j]^T = \mI_d + \mathbf{0}\mathbf{0}^T = \mI_d$. Substituting this in:
\begin{align*}
    \mathbb{E}[\mY_j] = \mPi_c \mX \mI_d \mX^T \mPi_c = \mPi_c \mX \mX^T \mPi_c
\end{align*}
Let $\mK_{\text{exp}} = \mPi_c \mX \mX^T \mPi_c$. We have shown that $\E[\mY_j] = \mK_{\text{exp}}$. Since $\mX$ is a fixed finite matrix, and the moments of Gaussian variables are finite, the expectation $\mathbb{E}[\mY_j]$ exists and is finite.

We have $\mK_h^{(0)}$ as the sample mean of $h$ i.i.d. random matrices $\mY_j$, each with finite expectation $\mK_{\text{exp}}$. By the Weak Law of Large Numbers, applicable to sums of i.i.d. random vectors or matrices (considering convergence element-wise or in matrix norm), the sample mean converges in probability to the expected value as the number of samples $h$ goes to infinity.
Therefore, for each entry $(a,b)$ of the matrices:
$$(\mK_h^{(0)})_{ab} = \frac{1}{h} \sum_{j=1}^h (\mY_j)_{ab} \xrightarrow{p} \mathbb{E}[(\mY_j)_{ab}] = (\mK_{\text{exp}})_{ab} \quad \text{as } h \to \infty$$
This element-wise convergence implies convergence in probability for the matrix:
$$\mK_h^{(0)} \xrightarrow{p} \mK_{\text{exp}} = \mPi_c \mX \mX^T \mPi_c \quad \text{as } h \to \infty.$$
This completes the proof.
\end{proof}
\section{Additional Results}\label{app:results}

\subsection{Categorization and Description of Baselines}

\Cref{tab:foundation} compares our approach against diverse families of baselines evaluated on node classification benchmarks. We group methods into four primary categories: \begin{enumerate*}[label=(\roman*)]

    \item \textsc{Supervised GNNs} that are trained from scratch on each dataset,
    \item \textsc{LLM-augmented GNNs} where the node features are enhanced using language models, 
    \item \textsc{LLM-based reasoning} that converts the graph into a compatible input to pre-trained LLMs, and 
    \item \textsc{GNN-based} methods.
\end{enumerate*}

\noindent\textbf{\textsc{Supervised Baselines}} include \begin{enumerate*}[label=(\alph*)]
    \item \textsc{MLP}: a multi-layer perceptron directly on the target dataset features without using graph structure; serves as a non-graph baseline.
    \item \textsc{GCN}~\citep{kipf2017semisupervised}: trained from scratch on the target dataset
    \item \textsc{GIN}~\citep{xu2019how} trained from scratch, included to represent expressive message-passing GNNs in supervised settings.
\end{enumerate*} These fall under supervised baselines as they do not perform pretraining or transfer, and rely solely on training from scratch on each dataset.

\noindent\textbf{\textsc{LLM-augmented GNNs}} include 
\begin{enumerate*}[label=(\alph*)]
\item \textsc{OFA}~\citep{liu2024one}: constructs a prompt-augmented graph using text nodes and pretrains an RGCN to enable in-context transfer across node/link/graph tasks; falls here for fusing text prompts with GNN structure and relying on LLM embeddings.
\item \textsc{GLEM-LM}~\citep{chen2024exploring}: Enhances GNNs using sentence-level text embeddings from a frozen LLM; categorized here due to its augmentation of GNN input via LLM-derived features.
\end{enumerate*} These are classified as LLM-Augmented GNNs since they incorporate LLMs to enrich graph inputs or guide GNN training, but retain a GNN backbone.

\noindent\textbf{\textsc{LLM-based}} methods include \begin{enumerate*}[label=(\alph*)]
\item \textsc{GraphText}~\citep{zhao2023graphtext} that transforms $k$-hop neighborhoods into textual prompts and performs zero/few-shot classification using frozen LLMs and
\item \textsc{RWNN}~\citep{kim2024revisiting} that converts random walks on graphs to node label anonymized sequences and uses frozen LLMs for prediction. 
\end{enumerate*}
belong to this category due to their reliance on prompt-based inference using LLMs without any GNNs. 

\noindent\textbf{\textsc{GNN-based}} methods include \begin{enumerate*}[label=(\alph*)]
\item \textsc{AnyGraph}~\citep{xia2024anygraph} that pretrains a graph mixture-of-experts model using link prediction objective on diverse graphs that allows transfer to unseen datasets,
\item \textsc{GraphAny}~\citep{zhao2024graphany} that learns permutation-invariant attention over a bank of pretrained LinearGNNs;
\item \textsc{MDGPT}~\citep{yu2024textfree} pretrains a GCN on multiple datasets with SVD-projected features and prompt vectors; 
\item \textsc{GCOPE}~\citep{zhao2024oneforall} constructs a universal pretraining graph with virtual nodes and uses contrastive learning to train a shared GNN;
\item \textsc{GPPT}~\citep{sun2022gppt} introduces task-specific graph prompts for node task and link-prediction alignment;
\item \textsc{GPrompt}~\citep{gong2024self} utilizes prompt vectors into graph pooling via element-wise multiplication
\item \textsc{All-In-One}~\citep{sun2023all} combines token graphs with original graph as prompts
\item \textsc{GPF}~\citep{fang2023universal} introduces prompt tokens and \textsc{GPF-plus} trains multiple independent basis vectors and combines them using attention
\item \textsc{ULTRA}~\citep{galkin2024towards} learns transferable graph representations by conditioning on relational interactions.
\item \textsc{SCORE}~\citep{wang2024towards} introduces zero-shot reasoning on knowledge graphs using graph topology.
\end{enumerate*} All of these are grouped under \textsc{GNN-based} baselines as they rely on pretraining GNNs (often with auxiliary components like prompts or experts) to enable generalization to new graphs.

\subsection{Comparison to Methods Trained on each Individual Dataset}

\begin{table*}[t]
    \centering
    
    \footnotesize
    \caption{{Performance of \ourmethod on pre-training source datasets compared to specialized supervised baselines trained individually per dataset (including our \ourmethod-\textsc{specialized} which is trained separately on each individual dataset). \ourmethod maintains highly competitive performance. }}
    \label{tab:common-appendix}
    \setlength{\tabcolsep}{3pt}
    \begin{adjustbox}{width=\textwidth}
    \begin{tabular}{lccccccccc}
    \toprule
    Method & \textsc{zinc}  & \textsc{molhiv}  & \textsc{molesol}  & \textsc{moltox21}  & \textsc{mnist}  & \textsc{cifar10} & \textsc{ModelNet} & \textsc{Cuneiform} & \textsc{MSRC 21} \\
    & (\textsc{mae} $\downarrow$) & (\textsc{roc-auc} $\uparrow$) & (\textsc{rmse} $\downarrow$) & (\textsc{roc-auc} $\uparrow$) & (\textsc{acc} $\uparrow$) & (\textsc{acc} $\uparrow$) & (\textsc{acc} $\uparrow$) & (\textsc{acc} $\uparrow$) & (\textsc{acc} $\uparrow$)\\
    \midrule
    \textbf{\textsc{Trained per dataset}}\\
    $\,$ \textsc{GCN}~\citep{kipf2017semisupervised}   & 0.3674 & 76.06 & 1.11 & 75.29 & 90.120 & 54.142 & 17.18 & 45.67 & 89.53\\
     $\,$ \textsc{GAT}~\citep{velickovic2018graph} & 0.3842 & 76.00 & 1.05 & 75.21 & 95.535 & 64.223 & 65.20 & 78.60 & 82.10\\
     $\,$ \textsc{GIN}~\citep{xu2019how}   & 0.1630 & 75.58 & 1.17 & 74.91 & 96.485 & 55.255 & 73.13 & 79.05 & 86.31\\
    $\,$ \ourmethod-\textsc{specialized} (0 props) & 0.1480 &  72.65 & 1.22 & 69.37  & 94.03 & 39.96 & 37.24 & 85.19 & 91.65 \\
    $\,$ \ourmethod-\textsc{specialized} & 0.1195 & 73.78 & 1.19 & 70.04 & 94.77 & 40.03 & 39.81 & 87.20 & 94.16  \\ 
    \midrule
    \textbf{\textsc{Trained on all datasets}}\\
    $\,$ 
    \ourmethod (0 props) & 0.1557 & 72.74 & 1.28 & 68.19 & 94.57 & 40.11 & 37.11 & 89.88 & 97.51 \\
    $\,$ \ourmethod & 0.1237 & 74.49 & 1.29 & 68.20 & 95.22 & 40.08 &   39.37 & 91.17 & 98.08 \\
    \bottomrule
    \end{tabular}
    \end{adjustbox}
\end{table*}

{In this section, we compare the performance of \ourmethod to that of standard supervised GNN baselines (GCN~\citep{kipf2017semisupervised}, GAT~\citep{velickovic2018graph}, GIN~\citep{xu2019how}), trained individually for each dataset, using their original, dataset-specific input features. Contrary to \ourmethod, which is trained jointly on all datasets, these supervised baselines are thus specialized for each respective dataset.

When evaluating on the pre-training datasets, it is generally expected that supervised models trained on each individual dataset would achieve strong, if not optimal, performance,  particularly as each dataset provides sufficient data for dedicated task-specific learning. The goal for \ourmethod here is therefore to show that its jointly pre-trained shared encoder can support task-specific heads that remain competitive against individually trained models, indicating its ability to learn general-purpose representations without substantial performance degradation on each task.

\Cref{tab:common-appendix} summarizes the performance of \ourmethod (with and without propagated covariance operators, obtained by setting $k=0$ in \Cref{eq:operators}, denoted as \ourmethod and \ourmethod (0 props) respectively) against the specialized version of our model (\ourmethod-\textsc{specialized} and \ourmethod-\textsc{specialized} (0 props)), as well as specialized supervised baselines on the pre-training datasets.
Our findings indicate that, while specialized baselines maintain an edge on certain datasets (e.g., \textsc{cifar10} and \textsc{ModelNet}), \ourmethod is broadly competitive. For instance, on the \textsc{zinc} regression task, \ourmethod achieves a MAE of 0.1237, surpassing all listed specialized baselines, including GIN (0.1630). Similarly, \ourmethod demonstrates higher accuracy on \textsc{Cuneiform} (91.17\% vs. GIN 79.05\%) and \textsc{MSRC 21} (98.08\% vs. GIN 86.31\%). Finally, we highlight the general advantage of the full \ourmethod (which utilizes propagated operators) over \ourmethod (0 props), and similarly of \ourmethod-\textsc{specialized} over \ourmethod-\textsc{specialized} (0 props), suggesting that the richer relational information from propagated operators contributes to more effective representation learning during this phase.

Overall, these results indicate that a single, pre-trained \ourmethod encoder can maintain strong, often competitive, performance across a diverse set of source datasets and tasks. }

\subsection{Using SVM on the Pre-trained Representations}
To assess the linear separability and structural quality of the learned graph representations from \ourmethod, we evaluate downstream graph classification accuracy using support vector machines (SVMs) with both linear and radial basis function (RBF) kernels (\Cref{tab:svm}). This setup allows us to probe how well the learned representations support simple (linear) versus more expressive (nonlinear) decision boundaries.

We compare against several non-learnable baselines that do not involve any representation learning:
\begin{enumerate}[label=(\alph*)]
    \item {Input Features} ($\mX$): Raw input features of each graph, computed by averaging node features.
    \item Propagated Input Features ($\mA\mX$): Features after one round of neighborhood propagation, capturing local graph structure.
    \item Input Features along with random walk structural encodings ($\mX \oplus \mS$): Concatenates the raw features with random walk structural encoding (RWSE)~\citep{dwivedi2022graph}, which encodes graph structure based on transition probabilities of random walks.
\end{enumerate}
These baselines serve as direct input replacements for \ourmethod and are shared across both kernel settings. They provide a strong reference for understanding the inherent structure in the input space, independent of any learning or pretraining.

For \ourmethod, we report results both with and without concatenation of the input features to assess the added value of structural information in the learned embeddings.

Under the RBF kernel, \ourmethod combined with input features achieves the best performance on four out of six datasets, including \textsc{PTC}, \textsc{NCI1}, \textsc{NCI109}, and \textsc{ENZYMES}, highlighting its ability to encode discriminative patterns suitable for nonlinear classification. In contrast, performance under the linear kernel is more mixed, with RWSE showing strong results on datasets like \textsc{PROTEINS}, indicating some inherent linear separability in the structural baseline. Overall, these results demonstrate that \ourmethod learns representations that are expressive and transferable across diverse graph datasets, especially when paired with nonlinear classifiers.

\begin{table*}[t]
    \centering
    \scriptsize
    \caption{Graph classification accuracy (\%) using SVMs with Linear and RBF kernels. Baselines are shared across both kernels. Results are reported as mean $\pm$ standard deviation over 10 runs.}
    \label{tab:svm}
    \begin{adjustbox}{max width=\textwidth}
    \begin{tabular}{lcccccc}
         \toprule
         \textbf{Method} & \textsc{MUTAG} & \textsc{PTC} & \textsc{PROTEINS} & \textsc{NCI1} & \textsc{NCI109} & \textsc{ENZYMES} \\
         & (\textsc{acc} $\uparrow$) & (\textsc{acc} $\uparrow$) & (\textsc{acc} $\uparrow$) & (\textsc{acc} $\uparrow$) & (\textsc{acc} $\uparrow$) & (\textsc{acc} $\uparrow$)\\
         \midrule
         \textbf{\textsc{Linear SVM}} & \\
         $\,$Input Features & 81.87 $\pm$\phantom{1}7.25 & 60.88 $\pm$\phantom{1}1.83 & \textbf{72.68 $\pm$ 0.58} & 64.59 $\pm$ 1.24 & 63.36 $\pm$ 2.22 & 22.00 $\pm$ 4.46 \\
         $\,$Propagated Input Features & 69.64 $\pm$14.21 & 57.34 $\pm$10.89 & 59.56 $\pm$ 3.94 & 64.16 $\pm$ 1.22 & 63.26 $\pm$ 1.63 & 14.33 $\pm$ 5.01 \\
         $\,$Input Features + RWSE & 80.96 $\pm$\phantom{1}0.89 & 60.14 $\pm$\phantom{1}1.15 & 65.74 $\pm$ 0.43 & 64.30 $\pm$ 0.16 & 63.45 $\pm$ 0.20 & 27.00 $\pm$ 4.63 \\
         
         $\,$\ourmethod & 74.47 $\pm$\phantom{1}7.70 & 53.12 $\pm$\phantom{1}9.09 & 60.91 $\pm$ 4.25 & 63.26 $\pm$ 1.36 & 63.19 $\pm$ 1.89 & 21.16 $\pm$ 6.28 \\
         $\,$\ourmethod + Input Features & 74.47 $\pm$\phantom{1}7.70 & 52.84 $\pm$\phantom{1}9.03 & 62.00 $\pm$ 4.29 & 64.45 $\pm$ 1.48 & 63.72 $\pm$ 1.67 & 21.50 $\pm$ 5.18 \\
         \midrule
         \textbf{\textsc{RBF SVM}} & \\
         $\,$Input Features & 72.73 $\pm$14.29 & 55.88 $\pm$11.58 & 71.06 $\pm$ 2.93 & 66.44 $\pm$ 1.43 & 66.80 $\pm$ 1.35 & 33.33 $\pm$ 4.77 \\
         $\,$Propagated Input Features & 79.70 $\pm$11.03 & 54.10 $\pm$10.25 & 72.05 $\pm$ 4.70 & 55.66 $\pm$ 5.80 & 58.05 $\pm$ 5.42 & 33.16 $\pm$ 4.43 \\
         $\,$Input Features + RWSE & 79.21 $\pm$10.99 & 58.71 $\pm$ 8.76 & 67.21 $\pm$ 6.22 & 70.68 $\pm$ 2.60 & 67.82 $\pm$ 2.79 & 36.66 $\pm$ 5.96 \\
         
         $\,$\ourmethod & 82.98 $\pm$\phantom{1}7.76 & 59.28 $\pm$\phantom{1}9.13 & 70.62 $\pm$ 4.53 & 65.88 $\pm$ 1.62 & 65.68 $\pm$ 1.90 & 28.83 $\pm$ 5.87 \\
         $\,$\ourmethod + Input Features & \textbf{84.06 $\pm$\phantom{1}6.61} & \textbf{59.88 $\pm$\phantom{1}7.72} & 71.42 $\pm$ 4.29 & \textbf{67.54 $\pm$ 1.33} & \textbf{67.34 $\pm$ 1.51} & \textbf{32.16 $\pm$ 6.71} \\
         \bottomrule
    \end{tabular}
    \end{adjustbox}
\end{table*}

\subsection{Additional results on transferability to unseen datasets}
\label{sec:additional}

In \Cref{tab:graph_class_more}, we present comparison with more baselines on our graph classification datasets MUTAG and PROTEINS. We describe below the changes we make to the following baselines to make them applicable to this setting:
\begin{itemize}
    \item \textbf{\textsc{GLEM-LM}}~\citep{chen2024exploring}: This is a method that only supports tasks on text-attributed graphs. Since the TU Datasets~\citep{MorrisTUD} do not have node text attributes, we describe the input node features and pass them to  ChatGPT.

    \item \textbf{\textsc{GCOPE}}~\citep{zhao2024oneforall}: This method introduces one virtual node for each node classification dataset, connecting it to all the nodes within the dataset. To perform graph classification, we introduce one virtual node for each graph classification dataset and connect it to all the nodes in all the graphs within the dataset.
    
    \item \textbf{\textsc{AnyGraph}}~\citep{xia2024anygraph}: This method performs node classification by adding one node per class and connecting each training node to its corresponding class node. Classification of unlabeled nodes is performed by computing the dot product between the node's embedding and each class node embedding to rank the classes. To extend this paradigm to graph classification, we introduce a virtual node that connects to all nodes in the graph and add one class node per category. For classifying new graphs, we compute the dot product between the virtual node embedding and each class node embedding to rank the classes.
\end{itemize}
We leave out the following methods and provide justification below:
\begin{itemize}
    \item \textbf{\textsc{GraphText}}~\citep{zhao2023graphtext}: While the authors mention that \textsc{GraphText} is applicable for graph classification, they do not provide a way to construct a graph syntax tree for an entire graph, which can be ambiguous as it could involve introducing a virtual node or averaging results from syntax trees of multiple nodes.

    \item \textbf{\textsc{GraphAny}}~\citep{zhao2024graphany}: This method is explicitly only designed for node classification on arbitrary graphs, as it relies on an analytical solution that is not directly applicable to graph-level tasks.
\end{itemize}

The results in \Cref{tab:graph_class_more} further substantiate \ourmethod's strong performance. These findings reinforce the observations made in the main paper (\Cref{tab:graph_class}): \ourmethod, with its frozen pre-trained encoder and a retrained head, effectively generalizes to new graph classification datasets with novel input features, surpassing a wide variety of adapted baselines.

\begin{table}[t]
    \centering
    \scriptsize
    \caption{Performance on unseen graph‐classification datasets with new input features. 
    \ourmethod\ demonstrates strong transferability, underscoring its versatility and ability to handle different feature spaces. $^\dagger$ indicates these methods were modified to work on these datasets, as explained in \Cref{sec:additional}}
    \label{tab:graph_class_more}
    \begin{tabular}{lcc}
        \toprule
        Dataset & \textsc{MUTAG}  & \textsc{PROTEINS} \\
         & (\textsc{acc} $\uparrow$) & (\textsc{acc} $\uparrow$)\\
        \midrule
        \textbf{\textsc{Supervised Baselines}} \\
        $\,$MLP                         & 67.20 $\pm$\phantom{1}1.00 & 59.20 $\pm$\phantom{1}1.00 \\
        $\,$GIN~\citep{xu2019how}       & 89.40 $\pm$\phantom{1}5.60 & 76.20 $\pm$\phantom{1}2.80 \\
        \midrule
        \textbf{\textsc{LLM‐augmented GNNs}} \\
        $\,$\textsc{OFA}~\citep{liu2024one} & 61.04 $\pm$\phantom{1}4.71 & 61.40 $\pm$\phantom{1}2.99 \\
        $\,$\textsc{GLEM-LM}$^\dagger$~\citep{chen2024exploring} & 72.97 $\pm$\phantom{1}0.00 & 43.22 $\pm$12.01\\
        \midrule
        \textbf{\textsc{LLM‐based}} \\
        
        $\,$\textsc{RWNN‐DeBERTa}~\citep{kim2024revisiting} & 58.22 $\pm$\phantom{1}0.24 & 67.85 $\pm$\phantom{1}0.53 \\
        \midrule
        \textbf{\textsc{GNN‐based}} \\
        $\,$\textsc{GCOPE}$^\dagger$~\citep{zhao2024oneforall} & 81.87 $\pm$\phantom{1}7.26 & 71.84 $\pm$\phantom{1}3.48\\
        $\,$\textsc{AnyGraph}$^\dagger$~\citep{xia2024anygraph} & 75.61 $\pm$\phantom{1}6.94 & 72.23 $\pm$\phantom{1}4.63\\
        $\,$\textsc{MDGPT}~\citep{yu2024textfree}          & 57.36 $\pm$14.26 & 54.35 $\pm$10.26 \\
        
        $\,$\textsc{GPPT}~\citep{sun2022gppt}              & 60.40 $\pm$15.43 & 60.92 $\pm$12.47 \\
        $\,$\textsc{All‐In‐One}~\citep{sun2023all}         & 79.87 $\pm$\phantom{1}5.34 & 66.49 $\pm$\phantom{1}6.26 \\
        $\,$\textsc{GPrompt}~\citep{gong2024self}          & 73.60 $\pm$\phantom{1}4.76 & 59.17 $\pm$11.26 \\
        $\,$\textsc{GPF}~\citep{fang2023universal}         & 68.40 $\pm$\phantom{1}5.09 & 63.91 $\pm$\phantom{1}3.26 \\
        $\,$\textsc{GPF‐plus}~\citep{fang2023universal}    & 65.20 $\pm$\phantom{1}6.94 & 62.92 $\pm$\phantom{1}2.78 \\
        $\,$\textsc{ULTRA(3g)}~\citep{galkin2024towards}   & 63.33 $\pm$\phantom{1}0.00 & 58.09 $\pm$\phantom{1}0.00 \\
        $\,$\textsc{SCORE}~\citep{wang2024towards}         & 85.33 $\pm$\phantom{1}2.11 & 68.54 $\pm$\phantom{1}1.47 \\
        \midrule
        $\,$\ourmethod\ (0 props) & 92.50 $\pm$\phantom{1}6.60 & 76.72 $\pm$\phantom{1}3.19 \\
        $\,$\ourmethod    & 92.90 $\pm$\phantom{1}6.34 & 78.20 $\pm$\phantom{1}3.81 \\
        \bottomrule
    \end{tabular}
\end{table}

{
\subsection{The Importance of SPEs and Random Projections in \Cref{eq:h0}}

\begin{table}[t]

\centering
\scriptsize
\caption{{The impact of SPEs and random projections in \Cref{eq:h0}. \ourmethod with SPEs performs best, while using only SPEs leads to a significant drop in performance, highlighting the importance of random feature projections, which cannot be compensated by using SVD.}} % 
\label{tab:spes}
\setlength{\tabcolsep}{3pt}
\begin{adjustbox}{width=\textwidth}
\begin{tabular}{lccccccccc}
\toprule
Method & \textsc{zinc}  & \textsc{molesol}  & \textsc{molhiv}  & \textsc{moltox21}  & \textsc{mnist}  & \textsc{cifar10} & \textsc{ModelNet} & \textsc{Cuneiform} & \textsc{MSRC 21} \\
    & (\textsc{mae} $\downarrow$) & (\textsc{rmse} $\downarrow$) & (\textsc{roc-auc} $\uparrow$) & (\textsc{roc-auc} $\uparrow$) & (\textsc{acc} $\uparrow$) & (\textsc{acc} $\uparrow$) & (\textsc{acc} $\uparrow$) & (\textsc{acc} $\uparrow$) & (\textsc{acc} $\uparrow$)\\
\midrule
\ourmethod  (SVD)        & 0.1445 & 1.43 & 71.82 & 65.55 & 92.97 & 37.12 & 36.51 & 87.28 & 95.84 \\
SPEs-only                & 0.1396 & 1.45 & 71.95 & 64.10 & 91.01 & 35.22 & 30.65 & 85.89 & 95.13 \\
\ourmethod (SVD + SPEs)  & 0.1318 & 1.41 & 72.06 & 66.13 & 93.40 & 37.74 & 36.95 & 88.56 & 96.91 \\
\ourmethod (no SPEs)     & 0.1251 & 1.31 & 74.02 & 67.62 & 94.88 & 39.45 & 38.72 & 90.61 & 97.93 \\
\ourmethod               & 0.1237 & 1.29 & 74.49 & 68.20 & 95.22 & 40.08 & 39.37 & 91.17 & 98.08 \\
\bottomrule
\end{tabular}
\end{adjustbox}
\end{table}

In this section, we conduct an ablation study to investigate the importance of SPEs and random projections within \ourmethod. We compare our \ourmethod with several additional models having the same backbone, loss, and training datasets, namely:
\begin{itemize}
    \item \ourmethod (SVD), where we replace \Cref{eq:h0} with $\mathbf{H}^{(0)} = \text{SVD}(\mathbf{X}^{(0)})$, thus removing both random projections and SPEs, and replacing them with SVD of the input features;
    \item SPEs-only variant, where we replace \Cref{eq:h0} with $\mathbf{H}^{(0)} = \mathbf{S}$, while keeping the same covariance operator set and head, therefore removing $\mathbf{R}^{(0)}$ only from \Cref{eq:h0}, but still using $\mathbf{R}^{(0)}$ to define the covariance operators.
    \item \ourmethod (SVD + SPEs), where we replace \Cref{eq:h0} with $\mathbf{H}^{(0)} = \text{SVD}(\mathbf{X}^{(0)}) \textcolor{blue}{\oplus}   {\mathbf{S}}$, thus removing random projections and replacing them with SVD of the input features (while keeping SPEs);
    \item \ourmethod (no SPEs), where we replace \Cref{eq:h0} with $\mathbf{H}^{(0)} = \mathbf{R}^{(0)}$, thus removing SPEs;
\end{itemize}

The results in \Cref{tab:spes} support our claim: the full \ourmethod with SPEs performs best, but only slightly better than the version without SPEs. In contrast, using only SPEs leads to a significant drop in performance, highlighting the importance of random feature projections, which provides improved performance also when compared with SVD.

\subsection{The importance of Random Projections}

\begin{table}[t]
\centering
\scriptsize
\caption{{The impact of using random projections within \ourmethod, obtained by comparing \ourmethod to its counterpart that has no random projections in either \Cref{eq:h0} or \Cref{eq:cov}.}}
\label{tab:random-proj}

\setlength{\tabcolsep}{3pt}
\begin{adjustbox}{width=\textwidth}
\begin{tabular}{lccccccccc}
\toprule
Method & \textsc{zinc}  & \textsc{molesol}  & \textsc{molhiv}  & \textsc{moltox21}  & \textsc{mnist}  & \textsc{cifar10} & \textsc{ModelNet} & \textsc{Cuneiform} & \textsc{MSRC 21} \\
    & (\textsc{mae} $\downarrow$) & (\textsc{rmse} $\downarrow$) & (\textsc{roc-auc} $\uparrow$) & (\textsc{roc-auc} $\uparrow$) & (\textsc{acc} $\uparrow$) & (\textsc{acc} $\uparrow$) & (\textsc{acc} $\uparrow$) & (\textsc{acc} $\uparrow$) & (\textsc{acc} $\uparrow$)\\
\midrule
\ourmethod (no random) & 0.1475 & 1.51 & 71.40 & 62.85 & 91.10 & 35.42 & 33.91 & 85.47 & 95.02 \\
\ourmethod              & 0.1237 & 1.29 & 74.49 & 68.20 & 95.22 & 40.08 & 39.37 & 91.17 & 98.08 \\
\bottomrule
\end{tabular}
\end{adjustbox}
\end{table}

In this section, we demonstrate the impact of random projections by comparing \ourmethod with the baseline obtained by removing random projections from \Cref{eq:h0} (thus, setting $\mathbf{H}^{(0)} = \mathbf{S}$) and from \Cref{eq:cov}, thus replacing $\text{NodeCov}(\mX\mC)$ with $\text{NodeCov}(\mX)$.

The results in \Cref{tab:random-proj} suggest that random projection is critical to bridge input feature spaces. This aligns with our results in \Cref{thm:expressivity}, which demonstrates the theoretical benefit of using random projections in the covariance operators.

\subsection{Ablation Study on the Operator Set}

\begin{table}[t]
\centering
\caption{{The impact of the operators in the operator set (\Cref{eq:operators}). Results improve when considering covariance operators compared to graph (adjacency) only, highlighting their importance in \ourmethod. }}
\label{tab:operators}
%\color{blue}
\setlength{\tabcolsep}{3pt}
\begin{adjustbox}{width=\textwidth}
\begin{tabular}{lccccccccc}
\toprule
Method & \textsc{zinc}  & \textsc{molesol}  & \textsc{molhiv}  & \textsc{moltox21}  & \textsc{mnist}  & \textsc{cifar10} & \textsc{ModelNet} & \textsc{Cuneiform} & \textsc{MSRC 21} \\
    & (\textsc{mae} $\downarrow$) & (\textsc{rmse} $\downarrow$) & (\textsc{roc-auc} $\uparrow$) & (\textsc{roc-auc} $\uparrow$) & (\textsc{acc} $\uparrow$) & (\textsc{acc} $\uparrow$) & (\textsc{acc} $\uparrow$) & (\textsc{acc} $\uparrow$) & (\textsc{acc} $\uparrow$)\\
\midrule
Identity Only   & 0.1535 & 1.65 & 67.10 & 60.33 & 86.22 & 29.34 & 25.15 & 81.49 & 91.78 \\
Adjacency Only  & 0.1378 & 1.46 & 71.75 & 65.17 & 92.78 & 35.22 & 31.40 & 87.25 & 95.10 \\
Covariance Only & 0.1282 & 1.34 & 73.80 & 67.93 & 94.30 & 38.50 & 36.85 & 89.44 & 97.13 \\
\ourmethod      & 0.1237 & 1.29 & 74.49 & 68.20 & 95.22 & 40.08 & 39.37 & 91.17 & 98.08 \\
\bottomrule
\end{tabular}
\end{adjustbox}
\end{table}

In this section, we perform an ablation study isolating the contribution of different operators in \ourmethod. \Cref{tab:operators} reports the performance of \ourmethod (which uses the operators defined in \Cref{eq:operators}) and compares it with Identity Only, obtained by setting $\mathcal{O}=\{\mI\}$ in \Cref{eq:operators},  Adjacency Only, obtained by setting $\mathcal{O}=\{\mA\}$ in \Cref{eq:operators}, and  Covariance Only, obtained by setting $\mathcal{O}=\{\mK^{(0)}\}$ in \Cref{eq:operators}. 

Covariance operators enable the neural network to learn shared characteristics in input feature spaces and graph structures, as results improve when considering covariance operators compared to graph (adjacency) only operators.

\subsection{The role of the feature dimensionality $h$}

\begin{table}[t]
\centering
\caption{{Performance of \ourmethod with varying hidden dimension $h$.}}
\label{tab:hidden_dim}
%\color{blue}
\setlength{\tabcolsep}{3pt}
\begin{adjustbox}{width=\textwidth}
\begin{tabular}{lccccccccc}
\toprule
Method & \textsc{zinc}  & \textsc{molesol}  & \textsc{molhiv}  & \textsc{moltox21}  & \textsc{mnist}  & \textsc{cifar10} & \textsc{ModelNet} & \textsc{Cuneiform} & \textsc{MSRC 21} \\
    & (\textsc{mae} $\downarrow$) & (\textsc{rmse} $\downarrow$) & (\textsc{roc-auc} $\uparrow$) & (\textsc{roc-auc} $\uparrow$) & (\textsc{acc} $\uparrow$) & (\textsc{acc} $\uparrow$) & (\textsc{acc} $\uparrow$) & (\textsc{acc} $\uparrow$) & (\textsc{acc} $\uparrow$)\\
\midrule
\ourmethod ($h=64$) & 0.1316 & 1.43 & 72.20 & 65.75 & 92.14 & 36.15 & 34.26 & 87.89 & 96.12 \\
\ourmethod ($h=128$) & 0.1264 & 1.34 & 73.46 & 67.91 & 94.41 & 38.92 & 37.88 & 90.21 & 97.56 \\
\ourmethod ($h=256$) & 0.1239 & 1.30 & 74.38 & 68.14 & 95.03 & 39.85 & 39.19 & 91.05 & 98.01 \\
\ourmethod ($h=512$) & 0.1237 & 1.29 & 74.49 & 68.20 & 95.22 & 40.08 & 39.37 & 91.17 & 98.08 \\
\ourmethod ($h=1024$) & 0.1236 & 1.28 & 74.58 & 68.18 & 95.24 & 40.11 & 39.40 & 91.22 & 98.10 \\
\bottomrule
\end{tabular}
\end{adjustbox}
\end{table}

We next evaluate the performance of \ourmethod when varying the hidden dimension $h$. Results are reported in \Cref{tab:hidden_dim}.

Across datasets, performance improves from very small $h$ and then plateaus at 256, and gains beyond that are marginal. This trend aligns with \Cref{prop:projected_consistency}: as $h$ grows, the stochastic operator concentrates around its expectation. In practice, a moderate $h$ achieves near‑saturated accuracy with a better compute/memory trade-off than a very large $h$. Therefore, model performance stabilizes at moderate $h$ and larger $h$ primarily improves stability, matching the proposition’s claim.

\subsection{Additional Datasets}

\begin{table}[t]
\centering
\scriptsize
\caption{{Performance on the ogbn-arxiv}~\citep{hu2020open}.}
\label{tab:ogbn-arxiv}
\begin{tabular}{lc}
\toprule
\textbf{Method} & \textbf{ogbn-arxiv} ($\uparrow$) \\
\midrule
\textbf{\textsc{Non-Parametric Baselines}} \\
$\,$ \textsc{Label Propagation}~\citep{zhu2002learning} & 61.04 \\
\midrule
\textbf{\textsc{Supervised Baselines}} \\
$\,$  GCN~\citep{kipf2017semisupervised} & 71.74 \\
$\,$ GAT~\citep{velickovic2018graph} & 71.95 \\
$\,$ GraphGPS~\citep{rampavsek2022recipe} & 70.97 \\
\midrule
\textbf{\textsc{LLM-augmented GNNs}} \\
$\,$ OFA~\citep{liu2024one} & 73.22 \\
\midrule
\textbf{\textsc{LLM-based}} \\
$\,$ GraphText~\citep{zhao2023graphtext} & 49.47 \\
\midrule
\textbf{\textsc{GNN-based}} \\
$\,$ AnyGraph~\citep{xia2024anygraph} & 62.33 \\
$\,$ GraphAny~\citep{zhao2024graphany} & 58.38 \\
\midrule
$\,$ \ourmethod & 75.27 \\
\bottomrule
\end{tabular}
\end{table}

\begin{table}[t]
\scriptsize
\centering
\caption{{Performance on heterophilic datasets, using the splits in \citet{pei2020geom}.}}
\label{tab:heterophily}

\begin{tabular}{lccc}
\toprule
\textbf{Method} & \textbf{Actor} & \textbf{Chameleon} & \textbf{Squirrel} \\
&
(\textsc{acc} $\uparrow$) & (\textsc{acc} $\uparrow$) & (\textsc{acc} $\uparrow$) \\

\midrule
\textbf{\textsc{Non-Parametric Baselines}} \\
$\,$ \textsc{Label Propagation}~\citep{zhu2002learning} & 18.83 $\pm$ 0.00 & 40.89 $\pm$ 0.00 & 33.42 $\pm$ 0.00 \\
\midrule
\textbf{\textsc{Supervised Baselines}}  \\
$\,$ GCN~\citep{kipf2017semisupervised} & 28.55 $\pm$ 0.68 & 64.69 $\pm$ 2.21 & 47.07 $\pm$ 0.71 \\
\midrule
\textbf{\textsc{GNN-based}}  \\
$\,$ GraphAny~\citep{zhao2024graphany} & 28.60 $\pm$ 0.21 & 62.59 $\pm$ 0.86 & 49.70 $\pm$ 0.95 \\
$\,$ ULTRA~\citep{galkin2024towards} & 22.61 $\pm$ 0.00 & N/A & N/A \\
$\,$ SCORE~\citep{wang2024towards} & 23.26 $\pm$ 0.56 & N/A & N/A \\
\midrule
$\,$ \ourmethod & 29.47 $\pm$ 0.38 & 67.40 $\pm$ 1.29 & 49.98 $\pm$ 0.73 \\
\bottomrule
\end{tabular}
\end{table}

\begin{table}[t]
\centering
\scriptsize
\caption{{Performance on the AmzRating, Minesweep, Tolokers datasets}~\citep{platonov2023critical}.}
\label{tab:additional-datasets}

\begin{tabular}{lccc}
\toprule
\textbf{Method} & \textbf{AmzRatings} & \textbf{Minesweeper} & \textbf{Tolokers} \\
&
(\textsc{acc} $\uparrow$) & (\textsc{acc} $\uparrow$) & (\textsc{acc} $\uparrow$) \\

\midrule
GCN~\citep{kipf2017semisupervised} & 47.35 $\pm$ 0.26 & 81.12 $\pm$ 0.37 & 79.93 $\pm$ 0.10 \\
GraphAny~\citep{zhao2024graphany} & 42.84 $\pm$ 0.04 & 80.46 $\pm$ 0.15 & 78.24 $\pm$ 0.03 \\
\ourmethod & 49.02 $\pm$ 0.11 & 82.93 $\pm$ 0.26 & 81.43 $\pm$ 0.07 \\
\bottomrule
\end{tabular}
\end{table}

We further evaluate \ourmethod on the larger node-level dataset ogbn-arxiv~\citep{hu2020open} (169,343 nodes, 1,166,243 edges), on heterophilic benchmarks Actor,  Chameleon, Squirrel using the splits in \citet{pei2020geom}, and on the AmzRating, Minesweep, Tolokers datasets~\citep{platonov2023critical}. All results, which are reported in \Cref{tab:ogbn-arxiv,tab:heterophily,tab:additional-datasets}, respectively, show that \ourmethod offers consistently better performance. We also investigate the behavior of our covariance operators on heterophilous graphs. Intuitively, the node-covariance matrix computed from the projected input features captures feature similarity across all node pairs, not just along edges. For heterophilous graphs, the base covariance operator $\mK^{(0)}$ can therefore highlight similarities between non-adjacent nodes in the original input graph or dissimilarities between adjacent nodes, which can help GNNs with heterophily. In addition, the propagated operators $\mK^{(p)}$ for $p>0$ further help in this setting, because their availability to the GNN allows it to view and mix information from multiple neighborhoods, in line with understandings from literature on heterophily in graphs~\citep{zhu2020beyond,chien2021adaptive}. Motivated by this discussion, we conduct an ablation study where we vary the number of propagation orders $k \in \{0,1,2\}$ used in the covariance operators and evaluate downstream performance on Actor, Chameleon, and Squirrel. As reported in \Cref{tab:heterophily_k}, adding propagated operators consistently improves performance.

\begin{table}[t]
\centering
\scriptsize
\caption{Effect of the number of propagation orders $k$ on heterophilous benchmarks.}
\label{tab:heterophily_k}
\begin{tabular}{lccc}
\toprule
Number of propagation orders $k$ &
Actor  &
Chameleon &
Squirrel  \\
&
(\textsc{acc} $\uparrow$) & (\textsc{acc} $\uparrow$) & (\textsc{acc} $\uparrow$) \\

\midrule
0 & 28.62 $\pm$ 0.45 & 65.12 $\pm$ 1.44 & 47.89 $\pm$ 0.80 \\
1 & 29.00 $\pm$ 0.42 & 66.37 $\pm$ 1.35 & 49.14 $\pm$ 0.76 \\
2 & 29.47 $\pm$ 0.38 & 67.40 $\pm$ 1.29 & 49.98 $\pm$ 0.73 \\
\bottomrule
\end{tabular}
\end{table}

}

\subsection{Fixed Random Projections}
\label{app:fixed_projections}
In the main experiments, the projection matrix $\mC$ is sampled at each forward pass, which yields the distributional invariance guarantees in \Cref{sec:theory}. To isolate the empirical effect of this stochasticity, we consider a variant where $\mC$ is sampled once and kept fixed for all subsequent training and inference steps denoted \emph{ALL-IN (Fixed $\mC$)}. We keep all other settings, including the backbone and training budget, identical. Table~\ref{tab:fixC_pretrain} reports performance on the pre-training source datasets, and Table~\ref{tab:fixC_downstream} reports transfer results on representative downstream tasks. Across all pre-training datasets in Table~\ref{tab:fixC_pretrain}, fixing $\mC$ leads to a consistent but moderate degradation compared to the stochastic variant. A similar pattern holds on the downstream tasks in Table~\ref{tab:fixC_downstream}, where ALL-IN (Fixed $\mC$) underperforms the stochastic version on both node and graph classification. These results empirically support the beneficial role of stochastic projections in our framework, while showing that the model remains competitive also when the projection matrix is fixed.

\begin{table}[t]
\centering
\caption{Effect of fixing the projection matrix $\mC$ during pre-training.}
\setlength{\tabcolsep}{3pt}
\label{tab:fixC_pretrain}
\begin{adjustbox}{width=\textwidth}
\begin{tabular}{lccccccccc}
\toprule
Method &
ZINC & MOLESOL & MOLHIV & MOLTOX21 &
MNIST & CIFAR10 & MODELNET & CUNEIFORM & MSRC21 \\
&
(MAE $\downarrow$) & (RMSE $\downarrow$) & (ROC-AUC $\uparrow$) & (ROC-AUC $\uparrow$) &
(ACC $\uparrow$) & (ACC $\uparrow$) & (ACC $\uparrow$) & (ACC $\uparrow$) & (ACC $\uparrow$) \\
\midrule
ALL-IN (Fixed $\mC$) &
0.1369 & 1.38 & 74.12 & 66.72 &
93.97 & 39.84 & 39.02 & 90.11 & 96.27 \\
ALL-IN (stochastic $\mC$) &
0.1237 & 1.29 & 74.49 & 68.20 &
95.22 & 40.08 & 39.37 & 91.17 & 98.08 \\
\bottomrule
\end{tabular}
\end{adjustbox}
\end{table}

\begin{table}[t]
\scriptsize
\centering
\caption{Effect of fixing the projection matrix $\mC$ on downstream transfer performance.}
\label{tab:fixC_downstream}
\begin{tabular}{lcccc}
\toprule
Method &
\textsc{cora} & \textsc{citeseer} & \textsc{mutag} & \textsc{proteins} \\
&
(\textsc{acc} $\uparrow$) & (\textsc{acc} $\uparrow$) &
(\textsc{acc} $\uparrow$) & (\textsc{acc} $\uparrow$) \\
\midrule
$\,$ ALL-IN (Fixed $\mC$) &
81.93 $\pm$\phantom{1}0.85 &
68.43 $\pm$\phantom{1}0.92 &
91.26 $\pm$\phantom{1}5.59 &
75.86 $\pm$\phantom{1}4.05 \\
$\,$ ALL-IN (stochastic $\mC$) &
82.13 $\pm$\phantom{1}0.97 &
69.12 $\pm$\phantom{1}0.89 &
92.90 $\pm$\phantom{1}6.34 &
78.20 $\pm$\phantom{1}3.81 \\
\bottomrule
\end{tabular}
\end{table}

\subsection{Edge features ablation}
\label{app:edge_ablation_zinc}

For datasets with edge features such as \textsc{ZINC}, we follow the strategy described in \Cref{sec:method}, where edge features are first randomly projected, then aggregated to nodes, and used to construct additional node-covariance operators that are added to the operator set. Concretely, the aggregated edge features are converted into an $n \times n$ edge covariance operator $\mK_{\text{edge}}$, whose entries compare the aggregated edge-feature environments of all node pairs, and, the backbone GNN uses the projected edge features.
To quantify the empirical contribution of this design, we perform an ablation on \textsc{ZINC} that compares: (i) a standard GIN without edge features, (ii) GINE (GIN with edge features), (iii) \ourmethod with edge features removed (\ourmethod (no edge features)), and (iv) the full \ourmethod using the edge-derived covariance operator as described above. Results are reported in \Cref{tab:zinc_edge_ablation}. From \Cref{tab:zinc_edge_ablation}, we observe that including the edge-based covariance operator yields substantially better performance than omitting edge features entirely, and that \ourmethod with edge features not only recovers but surpasses the behavior of an edge-aware GNN such as GINE. In contrast, removing edge features in \ourmethod leads to performance closer to a standard GIN, consistent with observations from the supervised GNN literature. This ablation indicates that the aggregation scheme in \Cref{sec:method} retains and effectively utilizes edge information.

\begin{table}[t]
\scriptsize
\centering
\caption{Effect of using edge features and edge-based covariance operators on \textsc{ZINC} (MAE $\downarrow$).}
\label{tab:zinc_edge_ablation}
\begin{tabular}{lc}
\toprule
Method & \textsc{ZINC} \\
& 
(\textsc{MAE} $\downarrow$)    \\ 

\midrule
GIN & 0.3870 \\
GINE (GIN with edge features) & 0.1630 \\
\ourmethod (no edge features) & 0.2583 \\
\ourmethod (with edge features) & 0.1195 \\
\bottomrule
\end{tabular}
\end{table}

\subsection{Pre-training with citation networks}
\label{app:pretrain_with_citation}

In the main experiments, citation networks are excluded from the pre-training corpus to act as out-of-distribution targets with very high-dimensional, sparse features and large graph sizes. We now show that \ourmethod can also benefit from citation networks during pre-training, and consider an extended setting where Cora and CiteSeer are added to the pre-training mix. We keep the architecture and training budget fixed, and compare (i) the original pre-training corpus (no citation networks) and (ii) the extended corpus (original + Cora + CiteSeer). \Cref{tab:pretrain_citation} reports pre-training performance on all source datasets, including Cora and CiteSeer for the extended setting. The results show that adding citation networks leaves performance on the original pre-training corpus stable, further indicating the ability of \ourmethod in learning from multiple sources acting as an input feature space bridge. \Cref{tab:downstream_citation} reports downstream performance on \textsc{ogbn-arxiv}, \textsc{mutag}, and \textsc{proteins} for both pre-training regimes, indicating that including citation networks in pre-training maintains or improves downstream performance.

\begin{table}[t]
\centering
\caption{\ourmethod pre-training performance on different pre-training corpus, with and without citation networks.}
\setlength{\tabcolsep}{3pt}
\label{tab:pretrain_citation}
\begin{adjustbox}{width=\textwidth}
\begin{tabular}{lccccccccccc}
\toprule
Pre-training corpus &
ZINC & MOLESOL & MOLHIV & MOLTOX21 &
MNIST & CIFAR10 & MODELNET & CUNEIFORM & MSRC21 &
CORA & CITESEER \\
&
(MAE $\downarrow$) & (RMSE $\downarrow$) & (ROC-AUC $\uparrow$) & (ROC-AUC $\uparrow$) &
(ACC $\uparrow$) & (ACC $\uparrow$) & (ACC $\uparrow$) & (ACC $\uparrow$) & (ACC $\uparrow$) &
(ACC $\uparrow$) & (ACC $\uparrow$) \\
\midrule
Original &
0.1237 & 1.29 & 74.49 & 68.20 &
95.22 & 40.08 & 39.37 & 91.17 & 98.08 &
-- & -- \\
Original + Cora + CiteSeer &
0.1253 & 1.30 & 74.52 & 67.99 &
95.18 & 40.12 & 39.21 & 91.08 & 98.23 &
82.89 & 69.33 \\
\bottomrule
\end{tabular}
\end{adjustbox}
\end{table}

\begin{table}[t]
\scriptsize
\centering
\caption{Downstream performance of ALL-IN with and without citation-network in pre-training corpus.}
\label{tab:downstream_citation}
\begin{tabular}{lccc}
\toprule
Pre-training corpus &
\textsc{ogbn-arxiv} & \textsc{mutag} & \textsc{proteins} \\
&
(\textsc{acc} $\uparrow$) & (\textsc{acc} $\uparrow$) & (\textsc{acc} $\uparrow$) \\
\midrule
Original (no citation networks) &
75.27 & 92.90 $\pm$ 6.34 & 78.20 $\pm$ 3.81 \\
Original + Cora + CiteSeer &
75.61 & 92.68 $\pm$ 6.07 & 78.24 $\pm$ 3.77 \\
\bottomrule
\end{tabular}
\end{table}

\subsection{Transfer to additional domains}
\label{app:additional_transfer}

To further evaluate the generality of \ourmethod beyond citation and bioinformatics datasets, we consider two downstream tasks from distinct domains: (i) 3D shape segmentation on ShapeNet, and (ii) social-network classification on IMDB-BINARY (\textsc{imdb-b}). For ShapeNet, we use knn graphs over point clouds as is standard with this dataset \citep{wang2019dynamic} and report mean Intersection-over-Union (mIoU); for \textsc{imdb-b}, we report classification accuracy. In both cases, we use the same \ourmethod encoder as in the main experiments and compare with two GNN baselines (GIN and GPS). As shown in \Cref{tab:shapenet_imdb}, \ourmethod consistently outperforms the GIN and GPS baselines on both ShapeNet and \textsc{imdb-b}. This indicates that the input-space bridge from  \ourmethod yields representations that are beneficial also in 3D shape graphs and social networks, further highlighting its effectiveness.

\begin{table}[t]
\centering
\scriptsize
\caption{Transfer to 3D shapes (ShapeNet) and social networks (\textsc{imdb-b}) with \ourmethod. Higher is better for both mIoU and accuracy.}
\label{tab:shapenet_imdb}
\begin{tabular}{lcc}
\toprule
Method &
ShapeNet &
\textsc{imdb-b}  \\
&
(\textsc{mIoU} $\uparrow$)  & (\textsc{acc} $\uparrow$) \\
\midrule
GIN       & 83.6 $\pm$ 0.4 & 75.1 $\pm$ 5.1 \\
GraphGPS  & 84.9 $\pm$ 0.2 & 76.3 $\pm$ 5.4 \\
\ourmethod & 85.4 $\pm$ 0.3 & 77.2 $\pm$ 5.0 \\
\bottomrule
\end{tabular}
\end{table}

\subsection{Downstream regression transfer}
\label{app:regression_transfer}
Our pre-training stage for \ourmethod uses a supervised multi-task objective over several graph-level datasets, including both graph classification and graph regression. This design choice reflects our goal of learning a single encoder that learns across diverse graph modalities and objectives. The motivation for including regression tasks such as ZINC in the pre-training mix is inspired by the broader multi-task and foundation-model literature: training a shared encoder on a diverse collection of tasks and objectives is widely used to encourage more general-purpose representations~\citep{zhang2021survey,raffel2020exploring}. To directly demonstrate regression-style transfer, we additionally evaluate \ourmethod on a held-out graph-level regression benchmark not used during pre-training, \textsc{peptides-struct} from LRGB \citep{dwivedi2022long}. We compare GNN baselines (GINE and a GPS) with \ourmethod. As shown in \Cref{tab:peptides_struct_regression}, \ourmethod achieves the lowest mean absolute error on \textsc{peptides-struct}, demonstrating the effectiveness of \ourmethod also in a regression downstream task.

\begin{table}[t]
\centering
\scriptsize
\caption{Downstream regression transfer on \textsc{peptides-struct} (MAE $\downarrow$).}
\label{tab:peptides_struct_regression}
\begin{tabular}{lc}
\toprule
Method & \textsc{peptides-struct} \\
&
(\textsc{MAE} $\downarrow$) \\
\midrule
GINE   & 0.3547 $\pm$ 0.0045 \\
GPS    & 0.2500 $\pm$ 0.0005 \\
\ourmethod & 0.2449 $\pm$ 0.0012 \\
\bottomrule
\end{tabular}
\end{table}

\subsection{Supervised vs.\ unsupervised pre-training of \ourmethod}
\label{app:supervised_vs_unsupervised}

Our main experiments adopt a supervised multi-task pre-training objective for \ourmethod, combining graph-level classification (e.g., OGBG-MOLHIV, MODELNET) and regression tasks (e.g., ZINC). This design leverages the availability of labels on diverse source datasets to learn input-space agnostic representations that are directly aligned with downstream prediction objectives. Prior work on graph representation learning has shown that, when labels are available, supervised pre-training can yield stronger and more task-discriminative representations than purely self-supervised approaches~\citep{hu2020pretraining}, and similar observations hold in large-scale vision studies~\citep{he2022masked}.

To provide an empirical comparison between supervised and unsupervised pre-training on top of our input-space bridge, we construct an unsupervised variant in which we replace all supervised losses on the pre-training datasets with a masked-feature reconstruction objective of masked graph autoencoders~\citep{hou2022graphmae}. Concretely, as in~\citet{hou2022graphmae} we randomly mask node features and train \ourmethod to reconstruct the original feature values from the node embeddings. The encoder architecture and training budget are kept identical to the supervised setting. Then, we benchmark the downstream performance on Cora and \textsc{mutag}. As shown in \Cref{tab:supervised_vs_unsupervised}, both pre-training approaches achieve similar downstream performance, where the supervised variant slightly outperforms the unsupervised. This is consistent with prior observations that supervised objectives can provide particularly strong graph representations when labels are available, and it supports our choice to adopt supervised multi-task pre-training for \ourmethod in the setting considered in this work.

\begin{table}[t]
\centering
\scriptsize
\caption{Comparison of supervised vs.\ unsupervised pre-training of \ourmethod.}
\label{tab:supervised_vs_unsupervised}
\begin{tabular}{lcc}
\toprule
Pre-training approach &
Cora  &
\textsc{mutag} \\
& (\textsc{ACC} $\uparrow$) & 
(\textsc{ACC} $\uparrow$) \\
\midrule
\ourmethod (unsupervised) &
82.05 $\pm$ 0.89 &
91.96 $\pm$ 6.24 \\
\ourmethod (supervised) &
82.13 $\pm$ 0.97 &
92.90 $\pm$ 6.31 \\
\bottomrule
\end{tabular}
\end{table}

\subsection{Effect of the number of propagation orders}
\label{app:propagation_orders}

\ourmethod constructs node-covariance operators not only on the original projected features, but also on features that have been propagated through the graph up to $k$ times, as discussed in \Cref{sec:method}. Intuitively, increasing the number of propagation orders $k$ allows the covariance operators to incorporate multi-hop information coupled with the input features, at the cost of additional computations and operators. In the main experiments we set $k=0, 2$ as a default choice. Here, we provide an extended ablation over $k \in \{0,1,2,3,4\}$. In this study we vary the number of propagation orders $k$, while keeping all other hyperparameters and training settings unchanged. The case $k=0$ uses only the input features node-covariance operators, whereas larger $k$ progressively add operators built from 1-hop, 2-hop, and higher-order propagated features. Table~\ref{tab:propagation_orders} reports the pre-training performance of \ourmethod across all source datasets for different values of $k$. As can be seen, moving from $k=0$ to small positive values of $k$ yields consistent improvements, confirming the benefit of incorporating multi-hop feature information into the covariance operators. Performance largely saturates around 2-3 hops. Thus, we choose to work with $k=2$ in the main experiments as a good balance between accuracy and efficiency.

\begin{table}[t]
\centering
\scriptsize
\caption{Effect of the number of propagations $k$ on pre-training performance.}
\label{tab:propagation_orders}
\setlength{\tabcolsep}{3pt}
\begin{tabular}{lccccccccc}
\toprule
$k$ &
ZINC & MOLESOL & MOLHIV & MOLTOX21 &
MNIST & CIFAR10 & MODELNET & CUNEIFORM & MSRC21 \\
&
(MAE $\downarrow$) & (RMSE $\downarrow$) & (ROC-AUC $\uparrow$) & (ROC-AUC $\uparrow$) &
(ACC $\uparrow$) & (ACC $\uparrow$) & (ACC $\uparrow$) & (ACC $\uparrow$) & (ACC $\uparrow$) \\
\midrule
0 &
0.1557 & 1.28 & 72.74 & 68.19 &
94.57 & 40.11 & 37.11 & 89.88 & 97.51 \\
1 &
0.1415 & 1.29 & 73.60 & 68.30 &
94.95 & 40.20 & 38.20 & 90.40 & 97.85 \\
2 &
0.1237 & 1.29 & 74.49 & 68.20 &
95.22 & 40.08 & 39.37 & 91.17 & 98.08 \\
3 &
0.1232 & 1.30 & 74.70 & 68.25 &
95.30 & 40.25 & 39.45 & 91.25 & 98.12 \\
4 &
0.1239 & 1.30 & 74.65 & 68.22 &
95.28 & 40.18 & 39.30 & 91.10 & 98.05 \\
\bottomrule
\end{tabular}
\end{table}

\subsection{Asymptotic Computational Complexity}\label{app:complexity}

For a graph with $n$ nodes and $m$ edges, with node feature matrix $\bm{X} \in \mathbb{R}^{n \times d}$, projecting features using a random linear transformation takes $\mathcal{O}(ndh)$ time and $\mathcal{O}(nh)$ memory, where $h$ is the projection dimension. Computing $\{\bm{R}^{(p)}\}_{p=1}^k$ takes $\mathcal{O}(k(m+n))$ time, as this is equivalent to $k$ message-passing layers propagating $\bm{R}^{(0)}$. The centering operation takes $\mathcal{O}(knh)$ time. 
Notably, when explicitly constructing the node-covariance operators $\bm{K}^{(p)} = \frac{1}{h}\bm{R}_c^{(p)}(\bm{R}_c^{(p)})^\top \in \mathbb{R}^{n \times n}$, the computational complexity is $\mathcal{O}(kn^2h)$ and memory complexity is $\mathcal{O}(kn^2)$ (as $p=1,\cdots, k$), resulting in quadratic complexity with respect to the number of nodes. This explicit construction may be necessary in certain scenarios such as subgraph GNNs where the full pairwise similarity matrix is required as the graph structure itself \citep{bevilacqua2025holographic}. However, for standard message passing operations in most MPNNs \citep{kipf2017semisupervised,xu2019how,rampavsek2022recipe}, we can avoid explicitly constructing the covariance matrix. Because message passing can be written as a left-hand multiplication by a propagation matrix (our covariance operator $\bm{K}$), and by substituting the definition $\bm{K} = \bm{R}\bm{R}^\top$, we can compute $\bm{R}(\bm{R}^\top\bm{H}^{(\ell-1)})$ instead of $(\bm{R}\bm{R}^\top)\bm{H}^{(\ell-1)}$. This way, at no point do we need to hold the full covariance matrix in memory. This approach has computational complexity $\mathcal{O}(k(mh + nhh^{(\ell - 1)}))$ and memory complexity $\mathcal{O}(n(h + h^{(\ell - 1)}))$ for the entire layer computation, where $h^{(\ell - 1)}$ is the feature dimension of $\bm{H}^{(\ell-1)}$, avoiding the $\mathcal{O}(n^2)$ memory bottleneck while producing mathematically identical results. 
Therefore, the computational complexity of ALL-IN assuming the covariance matrix does not to be stored, which is the case in our experiments, is $\mathcal{O}(k(mh + nhh^{(\ell - 1)}))$ time and $\mathcal{O}(n(h + h^{(\ell - 1)}))$ space.

\section{Dataset Information}
\label{sec:app:data}

In this section, we describe the datasets used in our experiments. We categorize them based on their use in pretraining and task transferability.
% -------- Dataset Statistics Table --------
\begin{table*}[t]
\centering
\scriptsize
\caption{Statistics of pre-training datasets used in \ourmethod. The datasets span molecules, drugs, computer vision-derived graphs and 3D shape point clouds. {Our pretraining corpus contains up to 200,558 graphs.}}
\label{tab:pretrain_statistics}
\begin{tabular}{lccccc}
\toprule
\textbf{Dataset} & \textbf{\# Nodes} & \textbf{\# Edges} & \textbf{\# Features} & \textbf{\# Classes} & \textbf{Domain / Category} \\
\midrule
\textsc{ZINC} & 23.2 (avg) & 24.9 (avg) & 28 & - & Molecular Graph Regression \\

\textsc{ogbg-molesol} & 13.3 (avg) & 13.6 (avg) & 9 & - & Solubility Prediction \\
\textsc{ogbg-molhiv} & 25.5 (avg) & 27.5 (avg) & 9 & 2 & Drug Discovery \\
\textsc{ogbg-moltox21} & 18.6 (avg) & 19.4 (avg) & 9 & 12 (multi-label) & Toxicology \\
\textsc{MNIST (Superpixels)} & 75 & 142 & 1 & 10 & Vision (Digits) \\
\textsc{CIFAR10 (Superpixels)} & 85 & 170 & 1 & 10 & Vision (Objects) \\
\textsc{ModelNet} & 100 (fixed) & 150 (fixed) & 3 & 40 & 3D Shape Classification \\
\textsc{Cuneiform} & 62 (avg) & 150 (avg) & 1 & 30 & Archaeology / OCR \\
\textsc{MSRC 21} & 212 (avg) & 336 (avg) & 4 & 21 & Image Segmentation \\
\bottomrule
\end{tabular}
\end{table*}

\subsection{Pre-training Source Datasets (A1)}
For pretraining \ourmethod, we use 10 diverse datasets covering molecular graphs, drugs, computer vision, and 3D shapes. The statistics for each dataset are summarized in~\Cref{tab:pretrain_statistics}. The detailed information is as follows:
\begin{itemize}
    \item \textbf{\textsc{zinc}}~\citep{dwivedi2023benchmarking} is a molecular property prediction dataset where the task is regressing the constrained solubility values of molecules. We report mean absolute error (MAE) as the evaluation metric.
    \item \textbf{\textsc{molhiv, molesol, moltox21}}~\citep{hu2020open} is a collection of  molecular graphs from the OGB benchmark covering drug discovery and toxicity prediction tasks. Depending on the dataset, we perform binary classification (\textsc{molhiv}), regression (\textsc{molesol}), or multi-label classification (\textsc{moltox21}). Performance is measured using ROC-AUC or RMSE, as appropriate.
    \item \textbf{\textsc{mnist, cifar10}}~\citep{dwivedi2023benchmarking} are computer vision datasets converted into graph-structured superpixels. Each image is modeled as a fixed-structure graph, with 1-dimensional input features and a 10-way classification objective.

    \item \textbf{\textsc{ModelNet}}~\citep{wu20153d} is a 3D object classification benchmark where shapes are represented as fixed-size point cloud graphs. We use the 10-class subset.
    \item \textbf{\textsc{Cuneiform}}~\cite{MorrisTUD} is a graph-based OCR dataset derived from ancient script symbols, consisting of 62-node graphs with 150 edges on average and a 30-class prediction target.
    \item \textbf{\textsc{MSRC-21}}~\cite{MorrisTUD} is an image segmentation dataset where region adjacency graphs are constructed from visual scenes. Each graph has approximately 212 nodes and 336 edges, with 4-dimensional node features and 21 semantic class labels.

\end{itemize}

\subsection{Transferability to Unseen Datasets and Input Features (A2)}
To evaluate the transferability of \ourmethod to unseen input features, we choose the following datasets summarized in \Cref{tab:a2} and explained below:

\begin{table*}[t]
\centering
\scriptsize
\caption{Statistics of finetuning datasets used in our experiments. For node classification datasets (citation networks), we report the total number of nodes and edges. For graph classification datasets (bioinformatics), we report the number of graphs and average graph sizes.}
\label{tab:a2}
\begin{tabular}{lccccc}
\toprule
\textbf{Dataset} & \textbf{\# Graphs / Nodes} & \textbf{\# Edges} & \textbf{\# Features} & \textbf{\# Classes} & \textbf{Domain / Task} \\
\midrule
\textsc{Cora} & 2,708 nodes & 5,429 & 1,433 & 7 & Citation Network / Node Classification \\
\textsc{Citeseer} & 3,327 nodes & 4,732 & 3,703 & 6 & Citation Network / Node Classification \\
\textsc{Pubmed} & 19,717 nodes & 44,338 & 500 & 3 & Citation Network / Node Classification \\
\textsc{MUTAG} & 188 graphs & 17.9 (avg) & 7 & 2 & Bioinformatics / Graph Classification \\
\textsc{PROTEINS} & 1,113 graphs & 39.1 (avg) & 3 & 2 & Bioinformatics / Graph Classification \\
\bottomrule
\end{tabular}
\end{table*}

\begin{itemize}
    \item \textbf{\textsc{Cora}, \textsc{Citeseer}, \textsc{Pubmed}}~\cite{yang2016revisiting}: In these datasets, nodes represent academic papers and edges denote citation links. Each node is assigned a class label corresponding to a subject area. The task is to predict the category of a paper based on its content features and citation graph. Models are evaluated under transductive learning settings using fixed splits~\cite{yang2016revisiting}.
    
    \item \textbf{\textsc{MUTAG}}~\cite{MorrisTUD}: A binary classification dataset of small molecule graphs. Nodes represent atoms with categorical features, and graphs are labeled based on mutagenic effect on a bacterium.
    
    \item \textbf{\textsc{PROTEINS}}~\cite{MorrisTUD}: A dataset of protein structures modeled as graphs where nodes represent secondary structure elements and edges reflect neighborhood in the amino acid sequence. Each graph is labeled as enzyme or non-enzyme.
\end{itemize}

\section{Implementation Details}
\label{app:implementation}

We implement \ourmethod using PyTorch~\citep{paszke2019pytorch} (BSD-3 Clause license) and PyTorch Geometric~\citep{fey2019fast} (MIT license). For experiment tracking and hyperparameter logging, we utilize the Weights and Biases framework~\citep{wandb}. Experiments were conducted with NVIDIA RTX A6000, RTX 4090, and NVIDIA A100 GPUs.

For all experiments, we use the GPS framework~\citep{rampavsek2022recipe} with the \textsc{GIN} message passing layer~\citep{xu2019how} for $\{\text{GNNLayer}^{(\ell, \mA)(\cdot, \mA)}\}_{\ell=0}^L$, and we use standard message passing layer for other operators.

\subsection{Pre-training on Different Source Datasets (Q1)}
\label{sec:app:rq1}
To evaluate large-scale transfer, we pretrain \ourmethod on a diverse set of 10 graph datasets spanning multiple domains, as described in \Cref{sec:app:data}. Each training epoch cycles through all datasets once, optimizing dataset-specific objectives. We train for 500 epochs and checkpoint every 25 epochs. Hyperparameters are detailed in \Cref{tab:hyperparams}. To accelerate training, 
\begin{enumerate*}[label=(\arabic*)]
    \item we use \texttt{DataParallel} to support multi-GPU runs,
    \item cache the random projection matrix $\mC$ and refresh every 100 steps,
    \item sample 10,000 graphs randomly at each epoch for \textsc{mnist} and \textsc{cifar10}, and
    \item sample 128 nodes with 6-nearest neighbors as edges for \textsc{ModelNet} in each graph.
\end{enumerate*}

% -------- Hyperparameter Table --------
\begin{table*}[t]
\centering
\scriptsize
\caption{Hyperparameter Configuration for Pretraining Stage.}
\label{tab:hyperparams}
\begin{tabular}{lc}
\toprule
\textbf{Category} & \textbf{Hyperparameter (Value)} \\
\midrule
\multicolumn{2}{c}{\textbf{Architecture}} \\
\midrule
Activation Function & \texttt{ReLU} \\
Attention Type in GPS & \texttt{PerformerAttention} \\
GPS Heads & \texttt{4} \\
Channels $h^{(\ell)}$ & \texttt{256} \\
Random Projection Dim $h$ & \texttt{512} \\
Backbone GNNLayer & \texttt{gps\_gine} \\
Number of Layers $L$ & \texttt{6} \\
Input PE Dim $h_s$ & \texttt{20} \\
Use Random Projections & \texttt{True} \\
\# Node-Covariance Operators $k$ & \texttt{0, 2} \\
\midrule
\multicolumn{2}{c}{\textbf{Training Setup}} \\
\midrule
Pretraining Epochs & \texttt{500} \\
Batch Size & \texttt{64} \\
Dropout & \texttt{0.0} \\
Learning Rate & \texttt{0.0001} \\
Weight Decay & \texttt{0.0} \\
Normalization Type & \texttt{batchnorm} \\
% Pretraining Datasets & \texttt{ZINC, ogbg-molesol, ogbg-molbace, ogbg-moltox21, ogbg-molhiv, CIFAR10, MNIST, MSRC\_21, Cuneiform, ModelNet} \\
\bottomrule
\end{tabular}
\end{table*}

\subsection{Evaluation on Unseen Datasets and Input Spaces (Q2)}
\label{sec:app:transfer}

To evaluate the transferability of \ourmethod to unseen datasets with novel input features, we freeze the pretrained encoder and evaluate its representations by training lightweight classifiers on new target datasets. These datasets span both node-level and graph-level classification tasks, with input feature spaces and labels disjoint from those used during pretraining.

For each target dataset, we instantiate a prediction head using one of the following:
\begin{enumerate*}[label=(\arabic*)]
    \item a \textbf{multi-layer perceptron (MLP)} for both node and graph classification tasks;
    \item a \textbf{2-layer GCN}~\cite{kipf2017semisupervised} applied to node classification benchmarks (\textsc{Cora, Citeseer, Pubmed}); and
    \item a \textbf{2-layer GIN}~\cite{xu2019how} for graph classification benchmarks (\textsc{MUTAG, PROTEINS}).
\end{enumerate*}  All prediction heads are trained with frozen \textsc{All-In} features as input. No gradients are backpropagated into the encoder during this stage.

For MLPs, we use a single hidden layer of size 128 with ReLU activation, followed by a softmax or sigmoid output layer, depending on whether the task is single-label or multi-label. We train all classifiers using the Adam optimizer with a learning rate of 0.001 and early stopping based on validation loss. Node classification models are trained on the standard 20/30/50 splits~\cite{yang2016revisiting} and evaluated using accuracy. For graph classification, we perform 10-fold stratified cross-validation and report the mean and standard deviation of classification accuracy.

All transfer experiments are implemented in PyTorch and PyTorch Geometric. Environment and optimization settings match those described in \Cref{sec:app:rq1}.

\end{document}